\newtheorem{proposition}{Proposition}
\newtheorem*{propositionnonumber}{Proposition}
\definecolor{orange}{rgb}{1.0, 0.5, 0.0}
\definecolor{violet}{rgb}{1.0, 0, 1.0}
\definecolor{darkgreen}{rgb}{0, 0.5, 0}
\definecolor{darkmagenta}{rgb}{0.55, 0.0, 0.55}
\newcommand{\sidenoteJose}[1]{\textcolor{violet}{${\leftarrow}\hspace{0pt}{\bullet}$}\marginpar{\textcolor{violet}{${\leftarrow}\hspace{0pt}{\bullet}$}\\\tiny{\textcolor{violet}{Jose: #1}}}}
\newcommand{\comment}[1]{}
\newcommand{\todo}[1]{\textcolor{orange}{[TODO: #1]}}
\newcommand{\xaxis}{$x$-axis\xspace}
\newcommand{\yaxis}{$y$-axis\xspace}
\newcommand{\Acc}{\ensuremath{\mathit{Acc}}\xspace}
\newcommand{\SPD}{\ensuremath{\mathit{SPD}}\xspace}
\newcommand{\TP}{\ensuremath{\mathit{TP}}\xspace}
\newcommand{\TN}{\ensuremath{\mathit{TN}}\xspace}
\newcommand{\FP}{\ensuremath{\mathit{FP}}\xspace}
\newcommand{\Pos}{\ensuremath{\mathit{Pos}}\xspace}
\newcommand{\Neg}{\ensuremath{\mathit{Neg}}\xspace}
\newcommand{\missing}{\ensuremath{\odot}}
\newcommand{\amplifies}{\ensuremath{^\bigtriangleup}}
\newcommand{\reduces}{\ensuremath{^\bigtriangledown}}
\title{Fairness and Missing Values}
\author{
Fernando Martínez-Plumed \And Cèsar Ferri \And David Nieves \And José Hernández-Orallo \\
\\
Valencian Research Institute for Artificial Intelligence (VRAIN)\\
Universitat Politècnica de València, Spain\\
\texttt{\{fmartinez,daniecor,cferri,jorallo\}@dsic.upv.es} \\
}
\begin{document}
\maketitle

\begin{abstract}
The causes underlying unfair decision making are complex, being internalised in different ways by decision makers, other actors dealing with data and models, and ultimately by the individuals being affected by these decisions. One frequent manifestation of all these latent causes arises in the form of missing values: protected groups are more reluctant to give information that could be used against them, delicate information for some groups can be erased by human operators, or data acquisition may simply be less complete and systematic for minority groups. As a result, missing values and bias in data are two phenomena that are tightly coupled. However, most recent techniques, libraries and experimental results dealing with fairness in machine learning have simply ignored missing data. In this paper, we claim that fairness research should not miss the opportunity to deal properly with missing data. To support this claim, (1) we analyse the sources of missing data and bias, and we map the common causes, (2) we find that rows containing missing values are usually fairer than the rest, which should not be treated as the uncomfortable ugly data that different techniques and libraries get rid of at the first occasion, and (3) we study the  trade-off between performance and fairness when the rows with missing values are used (either because the technique deals with them directly or by imputation methods). We end the paper with a series of recommended procedures about what to do with missing data when aiming for fair decision making. 
\end{abstract}

\keywords{Fairness \and Missing values \and Data Imputation \and Sample bias \and Survey bias \and Confirmation bias \and Algorithmic bias}

\section{Introduction}
Because of the ubiquitous use of machine learning and artificial intelligence for decision making, there is an increasing urge in ensuring that these algorithmic decisions are fair, i.e., they do not discriminate some groups over others, especially with groups that are defined over protected attributes such as gender, race and nationality  \cite{dwork2012fairness,angwin2016machine,executive2016big,noble2018algorithms,grgic2018human,speicher2018unified}. 
Despite all this growing research interest, fairness in decision making did not arise as a consequence of the use of machine learning and other predictive models in data science and AI \cite{ethical_societal}. Fairness is an old and fundamental concept when dealing with data that should cover all data processing activities, from data gathering to data cleansing, through modelling and model deployment. It is not simply that data is biased and this can be amplified by algorithms, but rather that data processing can introduce more bias, from data collection procedures to model deployment \cite{berk2017convex,bolukbasi2016man,hardt2016equality,NIPS2017_6995,zafar2017parity}.

It is therefore no surprise that fairness strongly depends on both the quality of the data and the quality of the processing of these data \cite{barocas2016big}. One major issue for data quality is the presence of missing data, which may represent absence of information but also some information that has been removed due to several possible reasons (inconsistency, privacy, or other interventions). Once missing data appears in the pipeline it becomes an {\em ugly duckling} for many subsequent processes, such as data visualisation and summarisation, feature selection and engineering, and model construction and deployment. It is quite common that missing values are removed or replaced as early as possible, so that they no longer become a nuisance for a bevy of theoretical methods and practical tools.

In this context, we ask the following questions. Are  missing data and fairness related? Are those subsamples with missing data more or less unfair? Is it the right procedure to delete or replace these values as many theoretical models and machine learning libraries do by default? In this paper we analyse all these questions for the first time and give a series of recommendations about how to proceed with missing values if we are (as we should be) concerned by unfair decisions.

Let us illustrate these questions with an example. The Adult Census Data \cite{kohavi2001data} is one of the most frequently used datasets in the fairness literature, where race and sex are attributes that could be used to define the protected groups. 
 Adult has 48,842 records and a binary label indicating a salary of $\leq$\$50K or $>$\$50K. There are 14 attributes: 8 categorical and 6 continuous attributes. The prediction task is to determine whether a person makes over 50K a year based on their attributes.  
Not surprisingly, as we see in Figure
\ref{fig:AdultNA}, there is an important number of missing values, as it is usually the case for real-world datasets, and most especially those dealing with personal data. As we will see later, the {\em missingness distribution} in this dataset is {\em not} 
missing completely at random (MCAR). This means that discarding or modifying the rows with missing values can bias the sample. But, more interestingly, these missing values appear in the {\tt occupation}, {\tt workclass} and {\tt native.country} attributes, which seem to be strongly related to the protected attributes. As a result, the bias that is introduced by discarding or modifying the rows with missing values can have an important effect on fairness.

\begin{figure}[!htb]
        \center{\includegraphics[width=0.55\textwidth]
        {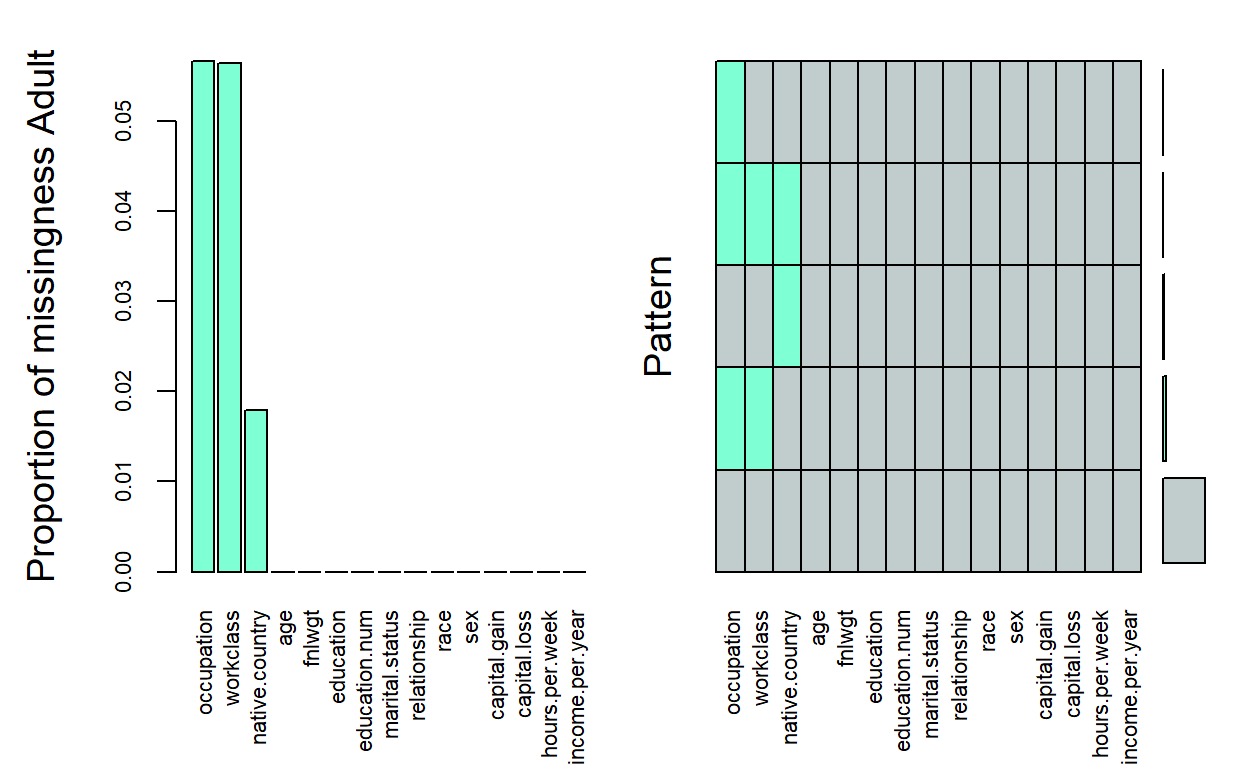}}
        \caption{\label{fig:AdultNA} \emph{(Left)} Histogram (\%) of missing values per input variable in the Adult dataset. Missing values are concentrated in three of the categorical features:  \texttt{workclass} (6\%), \texttt{native.country} (2\%), and \texttt{occupation} (6\%).  \emph{(Right)} Aggregation plot of all existing combinations of missing and non-missing values in the dataset. It is unlikely that missingness in the Adult dataset is MCAR because many missing values for the attribute \texttt{workclass} are also missing for \texttt{occupation} (about 6\% of examples have missing values in both).}
\end{figure}
    
On one hand, as missing values are so commonplace, there are
many techniques for data cleansing, feature selection and model optimisation that have been designed to convert (or get rid of) missing data with the aim of improving some performance metrics. However, --to our knowledge--, fairness has never been considered in any of these techniques.

On the other hand, many theoretical methods for dealing with fairness --the so-called mitigation methods-- do not mention missing values at all. Conditional probabilities, mutual information and other distributional concepts get ugly when attributes can have a percentage of missing values. Much worse, when the theory is brought to practice and ultimately to implementation, we see that the most common libraries (AIF360 toolkit \cite{aif360-oct-2018}, Aequitas \cite{2018aequitas}, Themis-ML \cite{bantilan2018themis}, Fairness-Comparison library \cite{friedler2019comparative}) 
simply remove the rows or columns containing the missing values, or assume the datasets have been preprocessed before being analysed (removing missing values), otherwise throwing an error to the user. As a result, the literature using these techniques and libraries simply report results about fairness as if the datasets did not contain the rows with the missing data. In other words, the datasets are simply mutilated.

Apart from the Adult dataset, we will also explore two other datasets that have potential fairness issues and, as usual, contain missing values.  The Recidivism dataset contains variables used by the COMPAS algorithm  to assess potential recidivism risk, where the class indicates whether the inmate commits a crime in less than two years after being released from prison or not. Data for over 10,000 criminal defendants in Florida was gathered by ProPublica \cite{angwin2016machine}, showing that black defendants were often predicted to be at a higher risk of recidivism than they actually were, compared to their white counterparts. 
In this datasets, there are also three attributes with missing values:  \texttt{days\_b\_screening\_arrest}, \texttt{c\_days\_from\_compas} and \texttt{c\_charge\_desc}.
Finally, we also study the data for 891 of the real Titanic passengers, a case where bias is more explicit than in the other cases. The class represents whether the passenger survived or not, where the conditional probability that a person survives given their \texttt{sex} and \texttt{passenger-class} is higher for females from higher classes. There are also three attributes with missing values:  \texttt{age}, \texttt{fare} and \texttt{embarked}. The distribution of these missing values is not MCAR either for these two datasets, and because of their semantics, it seems they can have an effect on fairness. 

In what follows 
 we overview the reasons why missing values appear, what kinds of missingness there are and how missing values are handled (e.g., imputation). Next, in section \ref{sec:fairness} we analyse the causes of fairness, along with some metrics, mitigation techniques and libraries. In section \ref{sec:mapping}, we put these two areas together and see why missingness and fairness are so closely entangled. We also analyse --for the running datasets and two scenarios each--  whether the examples with missing values are fairer than the rest in terms of a common fairness metric, Statistical Parity Difference (SPD). SPD sets the space of  trade-offs with performance metrics (accuracy), representing a bounding octagon, which we derive theoretically. Once this is understood conceptually,   in section \ref{sec:scenarios} we analyse a predictive model that can deal with missing values directly and see whether the bias is amplified or reduced. In section \ref{sec:imputation} we study what happens when imputation methods are introduced to allow  many other machine learning techniques to be used, and how the results compare with the models learnt by removing the missing values and with some reference classifiers (majority class and perfect classifier). 
Finally, in section \ref{sec:discussion} we analyse these results in order to make a series of recommendations about how to proceed when dealing with missing values if fairness is to be traded off against performance.

\section{Missing data}\label{sec:missing}

Missing data is a major issue in research and practice regarding real-world datasets. For instance, 
in the educational and psychological research domains, Peugh and Enders \cite{peugh2004missing} found that,  on average, 9.7\%  of the data was missing (with a maximum of 67\%). In the same area, Rombach et al. \cite{rombach2016current} estimated that this percentage ranged from 1\% to over 70\%, with a median percentage of 25\%. Another clear example of how pervasive missing data is can be found in the percentage of datasets (over 45\%) that have missing values in 
the UCI repository \cite{Dua:2019}, one of the most popular sources of datasets for machine learning and data science researchers. 

Being such a frequent phenomenon, it is not surprising that `missingness' has different {\em causes}. In the context of this paper, we need to analyse these causes if we want to properly understand the effect of missing data on fairness.
Three main patterns can be discerned in missing data \cite{lavrakas2008encyclopedia}:

\begin{itemize}

    \item \textbf{Partial completion (attrition)}. A partial completion (breakoff) is 
    produced for a single record or sequence when, after collecting a few values of a record, at a certain point in time or place within a questionnaire or a data collection process, the remaining attributes or measurements are missing. That means that attributes that are at the end and for users or cases more prone to fatigue or problems are more likely to be missing. Note that this case may have effect on rows as well, as if only a few questions (attributes) are recorded, then the whole example (row) could be removed. 
    This type of answering pattern usually occurs in longitudinal studies where a measurement is repeated after a certain period of time, or in telephone interviews and web surveys. This kind of missing data creates a dependency between  attributes that depends on their order, which should be used for imputation, and other methods for treating missing values. 
    
    \item \textbf{Missing by design}. This refers to the situation in which specific questions or attributes will not be posed to or captured for specific individuals. There are two main reasons for items to be missing by design. (1:  \textbf{contingency attributes}) Certain questions may be `non-applicable' (NA) to all individuals. In this case, the missingness mechanism is known and can be incorporated in the analyses. (2: \textbf{attribute sampling}) Specific design is used to administer different subsets of questions to different individuals (i.e., random assignment of questions to different groups of respondents). Note that in this case, all questions are applicable to all respondents, but for reasons of efficiency not all questions are posed to all respondents. Here, we also know the missingness mechanisms, but due to the random assignment of questions, this is the easiest case to treat statistically. 

    \item \textbf{Item Non-response}. No information is provided for some respondents on some variables. Some items are more likely to generate a non-response than others (e.g., items about private subjects such as income). In general, surveys, questionnaires, or interviews used to collect data suffer missing data in three main subcategories. (1: \textbf{not provided}) The information is simply not given for a certain question (e.g., an answer is not known, the value cannot be measured, a question is overlooked by accident, etc.); (2: \textbf{useless}) The information provided is unprofitable or useless (e.g., a given answer is impossible, unsuitable, unreadable, illegible, etc.); and/or (3: \textbf{lost}) usable information is lost due to a processing problem (e.g., error in data entry or data processing,  equipment fails, data corruption, etc.). The former two mechanisms originate in the data collection phase, and the latter is the result of errors in the data processing phase. 
    
\end{itemize}

\noindent While the three categories above originate from questionnaires involving people, they are general enough to include some other causes of missing values. For instance, if a speedometer stops working because it runs out of battery,  we have a partial completion, if it is only installed in some car models,  we have missing by design, and if it  is affected by the weather, we have an item non-response. 

On many occasions, for simplicity or because the traceability to the data acquisition is lost, we can only characterise some  statistical kinds of missing values. In general, a distinction is made between three types of missingness mechanisms \cite{rubin1976inference,little2014statistical}: \emph{(1)} \textit{missing completely at random} (MCAR), where missing values are independent of both unobserved and observed parameters of interest and occur entirely at random (e.g., accidentally omitting an answer on a questionnaire). In this case, missing data is independent and simple statistical treatments may be used. \emph{(2)} \textit{missing at random} (MAR), where missing values depend on observed data, but not on unobserved data (e.g., in a political opinion poll many people may refuse to answer based on demographics, then missingness depends on an observed variable (demographics), but not on the answer to the question itself). In this case, if the variable related to the missingness is available, the missingness can be handled adequately. MAR is believed to be more general and more realistic than MCAR (missing data imputation methods generally start from the MAR assumption). Finally, we have \emph{(3)} \textit{missing not at random} (MNAR), where missing values depend on unobserved data (e.g., a certain question on a questionnaire tend to be skipped deliberately by those participants with  weaker opinions).  MNAR is the most complex non-ignorable case where simple solutions no longer suffice, and an explicit model for the missingness must be included in the analysis. 

Note that all three mechanisms that generate missing data may be present \cite{enders2010applied}. 
While it is possible to test the MCAR assumption (t-test), MAR and MNAR assumptions cannot be tested due to the fact that the answer lies within the absent data \cite{molenberghs2009incomplete, enders2011missing}. 
The literature also describes different techniques to handle missing data depending upon the missingness mechanism \cite{millsap2009sage}. However, it is quite common that many practitioners apply a particular technique without analysing the missingness mechanism. Among the most common techniques, we can name the following:

\begin{itemize}
    \item Row or listwise deletion (LD) implies that whole cases (rows) with missing data are discarded from the analysis. When dealing with MCAR data  and the sample is sufficiently large, this technique has been shown to produce adequate parameter estimates. However, when the MCAR assumption is not met, listwise deletion will result in bias \cite{johnson2011toward,schafer2002missing}.
    \item Column deletion (CD) simply removes the column. This is an extreme option as it totally removes the information of an attribute. Also, it can generate bias as well, as the values of other columns may be affected by the missing values in the removed column.
    \item Labelled category (LC). An attribute (usually quantitative) can be binned or discretised, adding a special category for missing values. However, the special label representing the missing value lacks any ordinal or quantitative value. Sometimes, the existence of a missing value can be flagged with a new Boolean attribute and the original attribute is removed.  
    \item Imputation methods (IM). The missing value is replaced by a fictitious value. There are many methods to do this, such as replacing it by the mean (or the median) when the attribute is quantitative, and the mode when the attribute is qualitative, or estimating the value from the other attributes, even using predictive models. 
\end{itemize}

\noindent LD is very common in theory and IM is very common in practice. Even if the MCAR assumption is not disproven, LD is claimed to be suboptimal because of the reduction in sample size \cite{peugh2004missing,millsap2009sage}. Also, this technique for MCAR does not use the available data efficiently \cite{little2015modeling}. Therefore, methodologists have strongly advised against the use of LD  \cite{enders2010applied,little2014statistical}, judging it to be ``among the worst methods for practical applications" \cite[p. 598]{wilkinson1999statistical}. IM, on the contrary, is done by many libraries, especially as many machine learning techniques cannot deal with missing values. However, many libraries do not explicitly state what imputation method they are using, and this may vary significantly depending on the machine learning technique, library  or programming language that is used (e.g., random forest \cite{breiman2001random} handles missing values by imputation with average or mode whenever they are implemented using CART trees, but, in implementations where C4.5 trees are used instead, the missing values are not replaced and the impurity score mechanisms take them into account). 

We can analyse the missingness mechanisms in our three running datasets (Adult, Recidivism and Titanic). We use 
%
%
%
Little's MCAR global test \cite{little1988test}, a multivariate extension of the t-test approach that simultaneously evaluates mean differences on every variable in the dataset. Using the R package \textit{BaylorEdPsych}\footnote{\url{https://cran.r-project.org/web/packages/BaylorEdPsych/index.html}}, we reject the null hypothesis of MCAR
with $p$-values $< 0.001$ for the three datasets. 
MCAR is discarded for the three of them, and hence LD is inappropriate. 

Given that we do not have the complete traceability of the datasets, we can only hypothesise the causes for the missing values. For instance, 
in the Adult Census Data 
it may be due to item non-response, because of the distribution of missing values seen in Figure \ref{fig:AdultNA}. 


\section{Fairness}\label{sec:fairness}

Fairness in decision making has been recently brought to the front lines of research and the headlines of the media, but in principle, making fair decisions should be equal to making good decisions. The issue should apply both to human decisions and algorithmic decisions, but the progressive digitisation of the information used for decision making in almost every domain has facilitated the detection and assessment of systematic discrimination against some particular groups.  The use of data processing pipelines is then a blessing and not a curse for fairness, as it makes it possible to detect (and treat) discrimination in a wider range of situations than when humans make decisions solely based on their intuition. As we will see, metrics for fairness have led to computerised techniques to improve fairness (mitigation techniques). However, it is still very important to know the {\em causes} of discrimination, to avoid oversimplifying the problem. Also, in this paper we want to map these causes with those of missing values seen in the previous section.

What are the causes that introduce unfairness in the decision process? We can identify common distortions originally arising in the data, but also those in the algorithms or the humans involved in decision making that can perpetuate or even amplify some unfair behaviours. From the literature \cite{heckman1979sample,millsap1993methodology,donaldson2002understanding,nickerson1998confirmation,devine2002regulation,chouldechova2018frontiers,li2014examining,wijnhoven2014external,kunz2016effect}, we can classify them in six main groups:

\begin{itemize}

\item \textbf{Sample or selection bias}. It occurs when the (sample of) data 
is not representative of the target population about which conclusions are to be drawn. This happens due to the sample being collected in such a way that some members of the intended population are less likely to be included than others.  
A classic example of a biased sample happens in politics, such as the famous 1936 opinion polling for the U.S. presidential elections carried out by the American Literary Digest magazine, which over-represented rich individuals and predicted the wrong outcome 
\cite{squire19881936}. 
If some groups are known to be under-represented and the degree of under-representation can be quantified, then sample weights can correct the bias.

\item \textbf{Measurement bias (systematic errors)}. Systematic value distortion happens when the device used to observe or measure favours a particular result (e.g., a scale which is not properly calibrated might consistently understate weight producing unreliable results). This kind of bias is different from random or non-systematic measurement errors whose effects average out over a set of measurements. On the contrary, systematic errors cannot be avoided simply by collecting more data, but having multiple measuring devices (or observers of instruments), and specialists to compare the output of these devices. 

\item \textbf{Self-reporting bias (survey bias)}. This has to do with non-response, incomplete and inconsistent responses to surveys, questionnaires, or interviews used to collect data. The main reason is the existence of questions concerning private or sensitive topics (e.g., drug use, sex, race, income, violence, etc.). Therefore, self-reporting data can be affected by two types of external bias: \emph{(1)} social desirability or approval (e.g., when determining drug usage among a sample of individuals, the actual average value is usually underestimated); and \emph{(2)} recall error  (e.g.,  participants can erroneously provide responses to a self-report of dietary intake depending on her ability to recall past events). 

\item \textbf{Confirmation bias (observer bias)}. It is described as placing emphasis on one hypothesis because it involves favouring information that does not contradict the researcher’s desire to find a statistically significant result (or its previously existing beliefs). This is a type of cognitive bias in which a decision is made according to the subject’s preconceptions, beliefs, or preferences, but can also emerge owing to overconfidence (with contradictory results/evidence evidence being overlooked). 
Peter O. Gray \cite{psychology} provides an example of how confirmation bias may affect a doctor's diagnosis: \textit{"When the doctor has jumped to a particular hypothesis as to what disease a patient has may then ask questions and look for evidence that tends to confirm that diagnosis while overlooking evidence that would tend to disconfirm it"}. 

\item \textbf{Prejudice bias (human bias)}. A different situation is when the training data that we have on hand already includes (human) biases containing implicit racial, gender, or ideological prejudices. Unlike the previous categories, which mostly affect the predictive attributes (model inputs), this kind of bias is concerned with variables that are used as dependent variables (model outputs). Therefore, systems designed to reduce prediction error will naturally replicate any bias already present in the labelled data. We find examples of this in the re-offence risk assessment tool COMPAS deployed in federal US criminal justice systems to inform bail and parole decisions  that demonstrated biased against black people \cite{angwin2016machine}. Another one is the Amazon’s AI hiring and recruitment system that showed a clear bias against women \cite{AmazonBias}, having been trained from CVs submitted to the company over a 10-year period.

\item \textbf{Algorithm bias}. In this case the algorithm creates or amplifies the bias over the training data. 
For instance, different populations in the data may have different feature distributions (also having different relationships to the class label). As a result, if we train a group-blind classifier to minimise overall error, as it cannot usually fit both populations optimally, it will fit the majority population. It may be plausible that the best classifier is one that always picks the majority class, or ignores the values for some minority group attributes leading, thus, to (potentially) higher distribution of errors in the minority population. 

\end{itemize}

\noindent While the range of causes in general can be enumerated, the precise notion of fairness and what causes it in a particular case is much more cumbersome. It is nonetheless very helpful to become more precise with definitions or metrics of fairness, based on the notion of protected attribute and parity. Let us start with the definition of a decision problem, which is tantamount to a supervised task in machine learning. We will
focus on classification problems, since it is the most common case in the fairness literature and the metrics are simpler.

Let us define a set of attributes $X$, where the subset $S$ denotes the protected attributes, which are assumed to be categorical. For each protected attribute $S_i$, we have a set of values $V_i$ (e.g., $\{\mbox{\tt male}, \mbox{\tt female}\}$). Sometimes one of the groups that can be created by setting the value of one or more protected attributes is considered the `privileged' group (e.g., males). Usually, fairness metrics are based on determining whether decisions are different between groups or just between the privileged groups and the rest. Now consider a label or class attribute $Y$, which can take values in $C$ (e.g., $\{\mbox{\tt guilty}, \mbox{\tt non-guilty}\}$). For the analysis of fairness we usually consider one of the classes as the ``favourable" (or positive) outcome, denoted by $c^+$. Similarly, $c^-$ denotes the unfavourable class (or classes, in a multiclass problem, all the other classes together). For instance, in this case, {\tt non-guilty} is the favourable outcome. 
An unlabelled example $x$ is a tuple choosing values in $V_i$ for each $X_i$, possibly including the extra value $\missing$, representing a missing value. A labelled example or instance $\langle x, y \rangle$ is formed from an unlabelled example with the class value $y$ taken from $C$. A decision problem is just defined as mapping $x$ to $\hat{y}$, such that $\hat{y}$ is correct with respect to some ground truth $y$. Let us denote with $M$ a mechanism or model (human or algorithmic) that tries to solve the decision problem. 
Datasets, samples and populations are defined over sets or multisets of examples, and examples are drawn or sampled from them, using the notation $\sim D$. Given a population or dataset $D$ we denote by $D_{X_i = a}$ the selection of instances such that $X_i = a$, with $a \in V_i$. 
This also applies to labelled datasets, so, $D_{y=c^+}$ is just the number of positive examples in $D$, usually shortened as $\Pos(D)$ (we do similarly for the negative examples $\Neg(D)$). By comparing the true positives and negatives with predicted positives and negatives of a model (as a dataset with estimated labels $\hat{D}$), we can define TP, TN, FP and FN, as usual. %
Finally, given pairs $\langle x, y \rangle$ from a dataset or distribution $D$, the probability of favourable outcome $p(y=c^+)$ when $x \sim D$ is simply denoted by $p^+(D)$. 

Some fairness metrics choose an indicator that does not depend on the confusion matrix, but just on overall probabilities, for instance, predicting the favourable class or predicting the unfavourable class, so they can be applied to datasets and the predictions of a model. Other metrics are defined in terms of comparing predicted and true labels, e.g., comparing the true positive rate (TPR) and the false positive rate (FPR), so they can only be applied to models comparing them to the ground truth (or a test dataset). In the end, several combinations of cells in the confusion matrix lead to dozens of fairness metrics. 

A very common metric is the Statistical Parity Difference (SPD), which --for the favourable class-- can be defined for an attribute $X_i$ with privileged value $a$ as follows:
%
\[ \SPD^+_i(D) = p^+(D_{X_i = a}) - p^+(D_{X_i \neq a}).\]
\noindent For instance, if the attribute is {\tt Race} and the values are {\tt caucasian}, {\tt black} and {\tt asian}, if we consider {\tt caucasian} as the privileged group, SPD would be the probability of favourable outcomes for
Caucasians minus the probability of favourable outcomes for non-Caucasians. 
A value of 0 implies both groups have equal benefit, a value less than 0 implies higher benefit for the 
unprivileged group, 
and a value greater than 0 implies higher benefit for the 
privileged group. Note that the sign of SPD will change if for a binary dataset we swap the favourable and unfavourable class:
\begin{eqnarray*}
\SPD^-_i(D) & = &  p^-(D_{X_i = a}) - p^-(D_{X_i \neq a}) = 1 -  p^+(D_{X_i = a}) - (1 - p^+(D_{X_i \neq a})) 
= -\SPD^+_i(D).
\end{eqnarray*}
The same happens if we swap the privileged groups. This is an interesting property, as the choice of the favourable class  and especially the privileged group is sometimes arbitrary. For instance, in the adult dataset, if a model is used to grant a subsidy, the favourable class is earning $<$\$50K. The important thing is that a value closer to zero is fairer. 

Other popular metrics that are applicable to both datasets and models are Disparate Impact (DI), which calculates a ratio instead of the difference as in SPD. Some popular metrics that are only applicable to models are the Equal Opportunity Difference (EOD), which is the difference in TPR between the groups, or the Average Odds Difference (OddsDif), which also considers the FPR. 
There is usually some confusion about what fairness metric to look at, something that is not very different from the choice of performance metric in classification \cite{ferri2009experimental,hernandez2012unified}. Our experience, at least for the datasets we have selected for this study, is that  
we reach similar conclusions independently of the metric we choose for the study. As a result, in what follows, and for the sake of exposition and simplicity, we will use SPD as a representative fairness metric.

The introduction of formal metrics have fuelled the development of new techniques for discrimination mitigation. There are three main families: those who are applied to the data prior to learning the model, those that modify the learning algorithm, and whose that modify or reframe the predictions (these are known as pre-processing, in-processing and post-processing respectively in \cite{aif360-oct-2018}). Typically, maximising one fairness metric has a significant effect on the performance metrics, such as prediction error, so it is quite common to find trade-offs between fairness metrics and performance metrics. A possible way of doing this is through a Pareto optimisation, where different techniques can be compared to see whether they improve the Pareto front. We will use this approach in the rest of the paper.

Finally, both metrics and mitigation techniques are usually integrated into libraries. These are the most representative ones, in our opinion: 

\begin{itemize}
\item AIF360 toolkit \cite{aif360-oct-2018}: is an open-source library to help detect and remove bias in machine learning models. The AI Fairness 360 Python package includes a comprehensive set of metrics for datasets and models to test for biases, explanations for these metrics, and algorithms to mitigate bias in datasets and models.
\item Aequitas \cite{2018aequitas}: is an open-source bias audit toolkit for data scientists, machine learning researchers, and policymakers to audit machine learning models for discrimination and bias, and to make informed and equitable decisions around developing and deploying predictive risk-assessment tools.
\item Themis-ML \cite{bantilan2018themis}: Is a Python library that implements some fairness metrics (Mean difference and Normalized mean difference) and fairness-aware methods such as Relabelling (pre-processing), Additive Counterfactually Fair Estimator (in-processing) and Reject Option Classification (post-processing), providing a handy interface to use them. The library is open for being extended with new metrics, mitigation algorithms and test-bed datasets.
\item Fairness-Comparison library \cite{friedler2019comparative}: Presented as a benchmark for the comparison of the different bias mitigation algorithms, this Python package makes available to the user the set of metrics and fairness-aware methods used for the study. It also allows its extension by adding new algorithms and datasets.
\end{itemize}
Some of these libraries come (or are illustrated) with datasets that are known to have or lead to fairness issues. 
In our case, we chose those datasets from the literature that have missing values originally or we introduced new datasets, such as Titanic, where we think fairness issues are relevant (although somewhat in the opposite direction as usual). The details of these datasets were given in the introduction.

\section{Mapping missingness and unfairness}\label{sec:mapping}

The new question we ask in this paper is the effect of missing values for fairness. As we will see, this is a complex relation for which we need to map the causes of missing values to the causes of unfair treatment, as illustrated in Figure \ref{fig:mapping}. 
Looking at the lefthand side of the figure, we see that missing values might be the consequence of innumerable factors, from basic errors while processing and acquiring data to intentional 
action from human agents. When fairness is taken into consideration, one must realise that the missing data might not be evenly distributed between different groups, which in turn might lead unwanted effects on the fairness of the data and models created, depending on how the missing data are handled. As an example, on the side of errors, an important factor is that different groups might have different semantic constructs to answer the same query, leading to different interpretations and omissions \cite{Milfont2010}, e.g., leading to more missing values on a sensitive group than the other. On the other extreme, people might intentionally omit information as a natural cooping mechanism when there is a belief that a truthful and complete answer might lead to a discriminatory and unfair decision \cite{Schwarz1998}.

If we go one by one from the causes of missingness to unfairness or vice versa, we see that many combinations are related. While {\em attrition}, {\em attribute sampling} (if random) and {\em lost information} may be less associated with some other causes of unfairness, some others, such as {\em contingency questions}, {\em not provided} and {\em useless answers} may be  strongly related to fairness. It is not difficult to see that there are common underlying causes for these missingness situations and {\em self-reporting (survey) bias}, {\em confirmation (observer) bias} and {\em prejudice (human) bias}. What we do not know is whether some of the conscious or unconscious actions done by the actors in the process may have a compensatory result. For instance, are women who do not declare their number of children treated in a more or less fair way than those who declare (or lie) on their number of children? In the end, filling a questionnaire or simply acting when a person knows that is being observed with the purpose of building a model from their behaviour triggers many mechanisms in which people can conceal their real information or behaviour, in order to be classified in their desired group. In other words, some types of missing values can be used in an adversarial way by the person being modelled, trying to be classified into the favourable outcome. All this happens with university admissions, credit scoring, job applications, dating apps, etc.

\begin{figure}[!htb]
        \center{\includegraphics[width=0.5\textwidth]
        {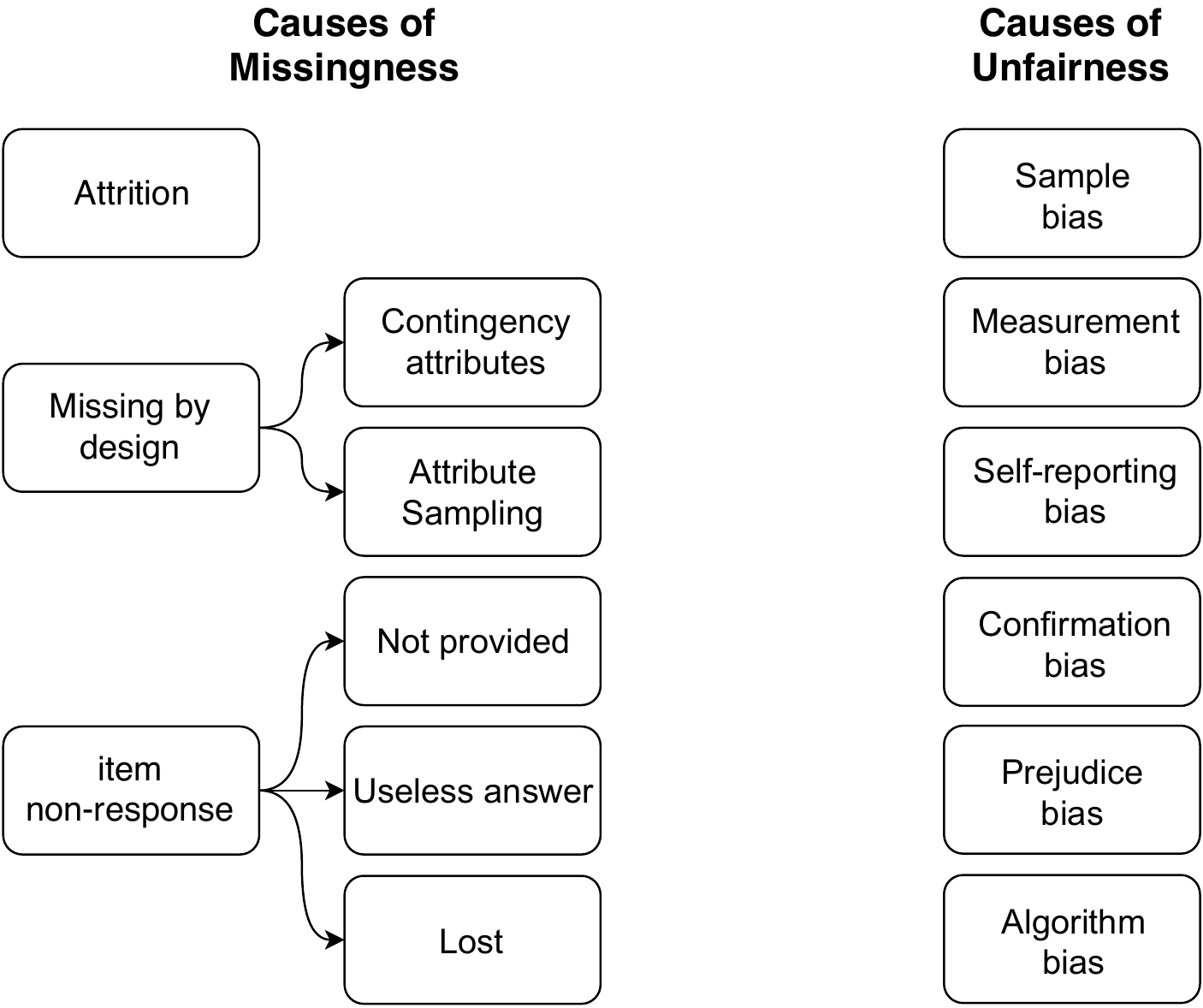}}
        \caption{\label{fig:mapping} Mapping between causes of missingness and unfairness.
        }
\end{figure}

We may get more certainty about the relation between missingness and unfairness by analysing some real data. Let us look at our three running datasets, Adult, Recidivism and Titanic, each of them with two protected attributes, which allows us to perform six different evaluations\footnote{For the sake of reproducibility, all the code and data can be found in \url{https://github.com/nandomp/missingFairness}}. First, we do a simple correlation analysis, as shown in Figure \ref{fig:corrs}. What we see is that having a missing value or not shows small correlations with the protected attributes and the class, although this is affected by the proportion of missing values per attribute being relatively small. The purpose of this correlation matrix is actually to confirm that  strong correlations are not found, and the relations, if they exist, are usually more subtle.  

 \begin{figure}[!htb]
          \centering
          {\includegraphics[width=0.3205\textwidth]{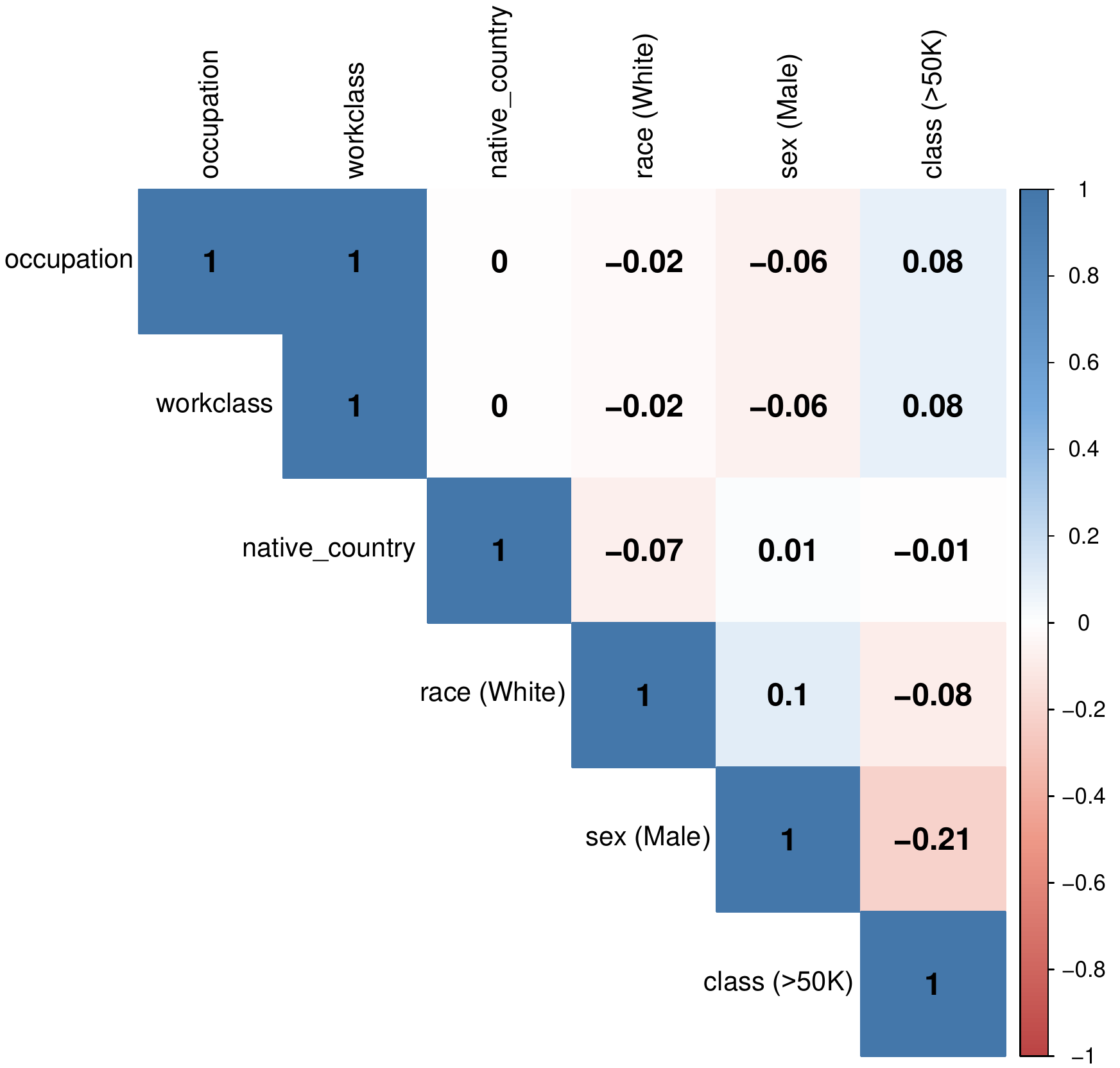}\label{fig:corrAdult}}
          \hfill
          {\includegraphics[width=0.36\textwidth]{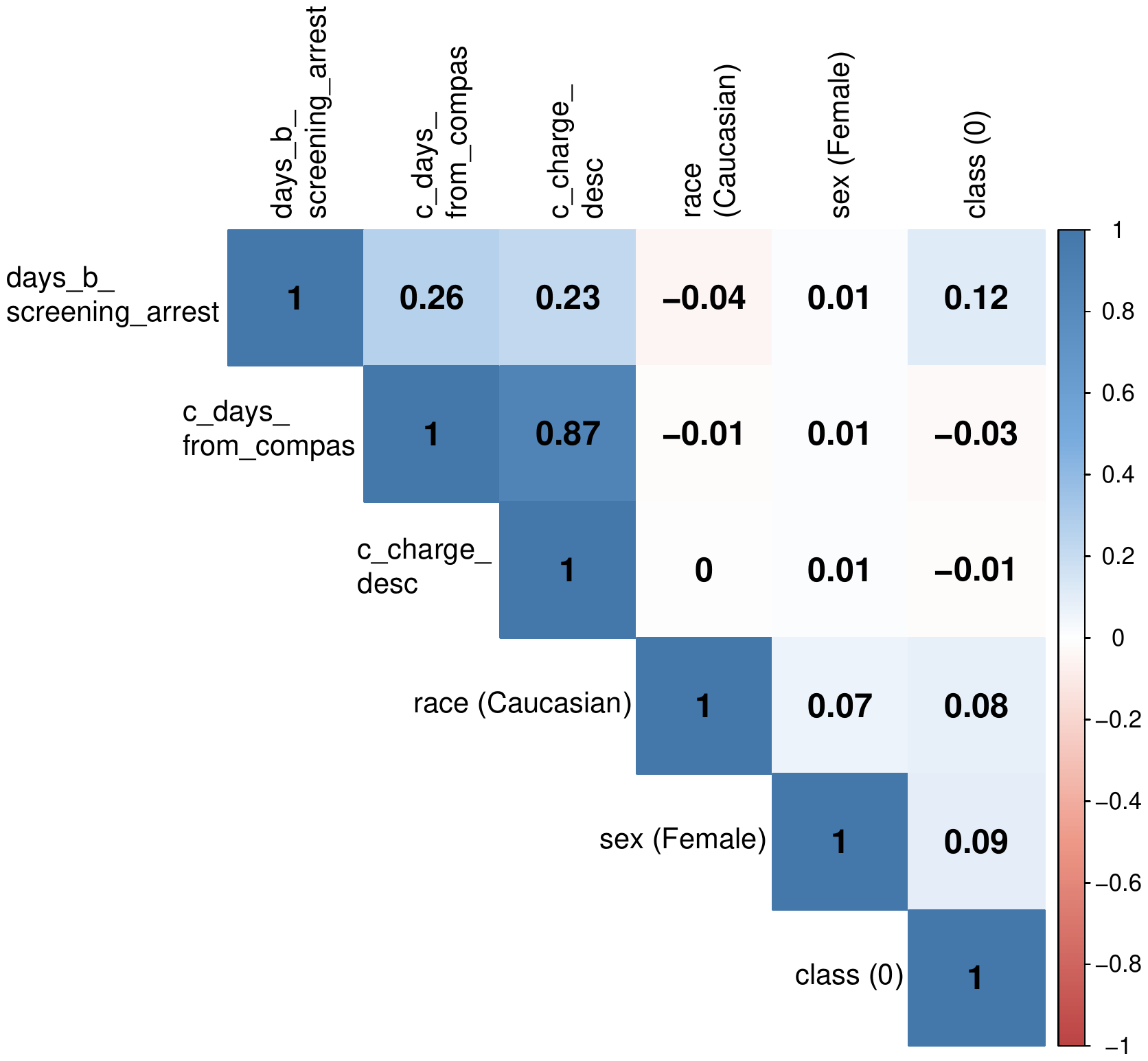}\label{fig:corrRecid}}
          \hfill
          {\includegraphics[width=0.303\textwidth]{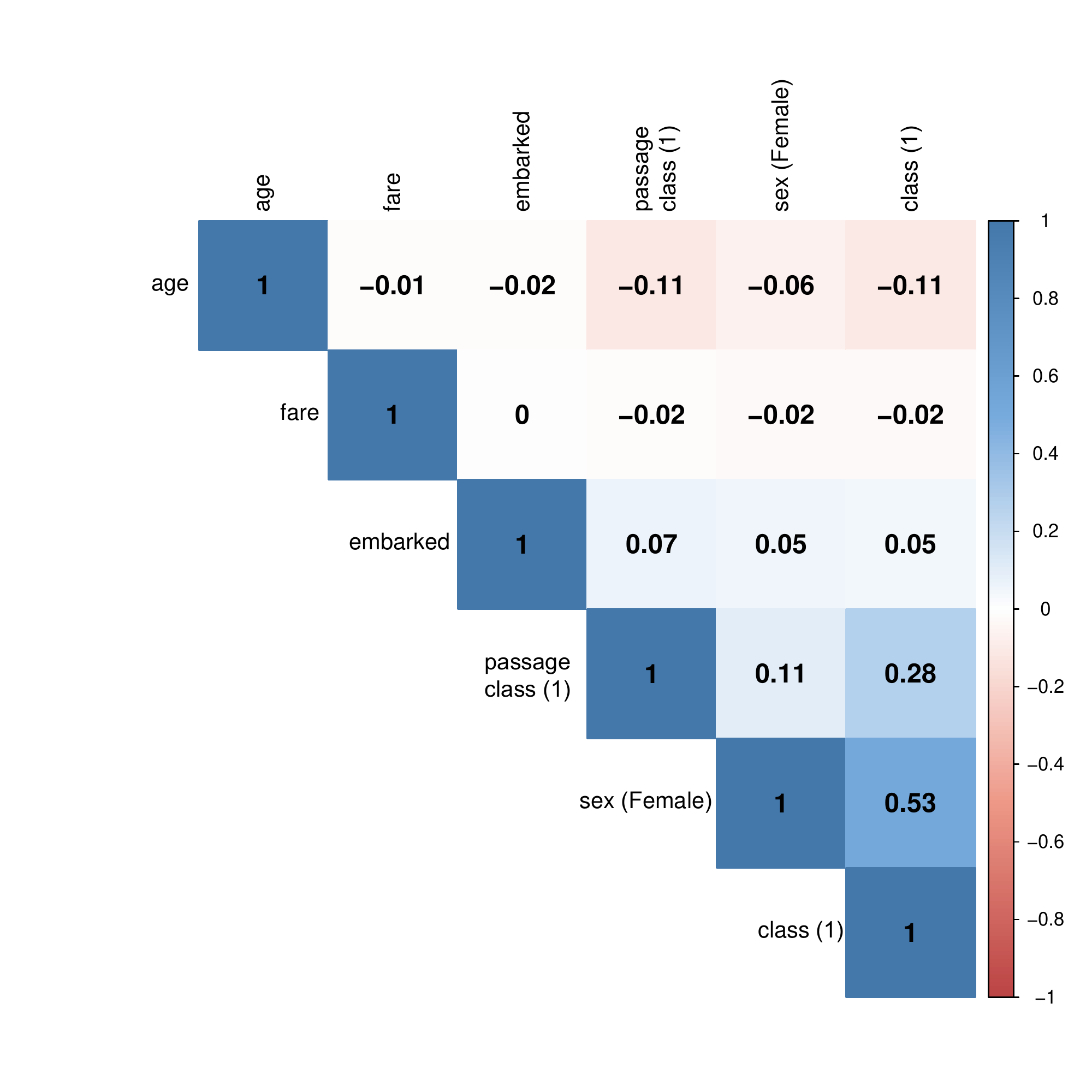}\label{fig:corrTitanic}}
          \caption{\label{fig:corrs} Correlation matrices for the Adult (left), Recidivism (centre) and Titanic (right) datasets. For the whole dataset We analyse  Pearson's  correlations between a discretised version of the attributes that may contain missing values (1 if it is missing and 0 otherwise), the protected attributes (1 if it is privileged and 0 otherwise) and the positive (favourable) class. 
          }
\end{figure}

This is what we see when we analyse a fairness metric. For each of these six cases (3 datasets $\times$ 2 privileged groups), Table \ref{tab:PRE} shows the fairness metric (SPD) for the whole data set (``all rows''), for the subset of examples with missing values (``with \missing'') and for the rest (``w/o \missing''). If we look at the value of SPD for all rows for Adult, we see that this is positive. 
This means higher benefit for the privileged groups, whites and males. 
Nevertheless, it is important to note that the favourable class for Adult is earning more than \$50K. If a model is used to determine who is entitled to a subsidy, then having the label $>$\$50K would not the favourable outcome. 
This suggests that we should take the sign in a more relative way, paying attention to the absolute magnitudes. 
 A similar result is seen for Recidivism, where the positive values for SPD indicate that the privileged groups (Caucasian and female) are associated with no recidivism more often than the other groups. Finally,  
 Titanic, where the positive (and high) values for SPD indicate that the privileged groups (first class passengers and females) fared very favourably (their survival rate was much higher than the complementary groups). Here, the bias is much stronger, as this followed an explicit protocol favouring females in ship evacuation at the time, and many other conditions favouring first class passengers very clearly.

\newcolumntype{L}[1]{>{\raggedright\arraybackslash}p{#1}}
\newcolumntype{C}[1]{>{\centering\arraybackslash}p{#1}}
\newcolumntype{R}[1]{>{\raggedleft\arraybackslash}p{#1}}

\setlength{\tabcolsep}{1.6pt}
\renewcommand{\arraystretch}{1.2}

\begin{table}[ht]
\centering
\resizebox{0.95\textwidth}{!}{%
\begin{tabular}{lcC{2.5cm}C{1.85cm}C{2cm}C{1.75cm}C{1.75cm}C{2cm}C{2.75cm}C{2cm}C{2cm}C{2cm}}\toprule
\textbf{dataset} &  \textbf{\#rows} & \textbf{\#rows with \missing}  & \textbf{\#cols with \missing} & \textbf{protected attribute}& \textbf{privileged values}& \textbf{unprivlgd. values} & \textbf{$c^+$}& \textbf{majority}& \textbf{SPD (all rows)} & \textbf{SPD (rows with \missing)} & \textbf{SPD (rows w/o \missing)} \\ 
  \hline
\multirow{ 2}{*}{Adult} &  \multirow{ 2}{*}{48842} &  \multirow{ 2}{*}{3620 (7.4\%)} &     \multirow{ 2}{*}{3} & Race& white & $\neq$ white & $>$\$50K&   $\leq$ \$50K (76\%) & 0.1014 & \textbf{0.0361} & 0.1040 \\ 
                        &   &   &      & Sex & male &female & >\$50K &$\leq$ \$50K (76\%)& 0.1945 & \textbf{0.1117} & 0.1989 \\ \hline

\multirow{ 2}{*}{Recidivism} & \multirow{ 2}{*}{7214} &   \multirow{ 2}{*}{314 (4.4\%)} &  \multirow{ 2}{*}{3} & Race & caucasian & $\neq$ caucasian& 0 (no crime)& 0 (no crime) (55\%) & 0.0864 & \textbf{0.0716} & 0.0920 \\ 
 &  &    &   & Sex & female & male & 0 (no crime)& 0 (no crime) (55\%) & 0.1161 & \textbf{0.0243} & 0.1186 \\ \hline

\multirow{ 2}{*}{Titanic} & \multirow{ 2}{*}{1309} &  \multirow{ 2}{*}{26 (20.3\%)} &  \multirow{ 2}{*}{3} & Class & 1&2-3& 1 (survived) & 0 (died) (62\%) & 0.3149 & \textbf{0.2722} & 0.3115 \\ 
 &   &    &    & Sex & female & male & 1 (survived) & 0 (died) (62\%) & 0.5365 & \textbf{0.4727} & 0.5458 \\ 

\toprule
\end{tabular}
}
\caption{
Dataset description and fairness metrics (SPD) for different subsets of data. We indicate the number of rows and columns with missing values (\missing), the protected attribute, its privileged and unprivileged values,  the favourable class $c^+$ and the majority class (with the proportion for all rows). Bold figures represent the fairest subset.
}
\label{tab:PRE}
\end{table}

The really striking result appears when we look at the metrics for the subsets of instances  with missing values (``SDP (rows with \missing)"). The six cases consistently show that these subsets  are {\em fairer} (their values closer to zero) than the subset of instances containing only clean rows (without missing values). How do we interpret this finding? It is difficult to explain without delving into the particular characteristics of each dataset. For instance, for Adult, this means that for those individuals with missing values in occupation, workclass or native.country, there is a smaller difference in the probability of a favourable outcome ($>$50K) between privileged and unprivileged groups than for the other rows. Note that this observation is about the data, we are not talking about algorithmic bias yet.

Algorithmic bias will be affected by the metric that is used for performance. If the metric is a proper scoring rule, the optimal value is attained with the perfect model, i.e., a model that is 100\% correct. But a perfect classifier would necessarily have the fairness metrics of the test dataset. This leads us to the following observation: the SPD of the test dataset determines the space of possible classifiers, i.e., a region that is bounded by the best trade-off between the performance metric and SPD. We can precisely characterise this region. For instance, if we choose Accuracy as performance metric, the following proposition shows this space of possible classifiers in terms of the balance of SPD and Accuracy. 

\begin{proposition}\label{prop:octagon}
Assuming that the majority class is the favourable class, the set of classifiers with respect to Accuracy and SPD is bounded by the octagon defined by the following eight points $(\Acc, \SPD)$:
\[\left(1, \:\:\:\:\:\:\:\:\:\:\:\:\:\:\:\:\:\:  p^+(D_{X_i = a}) - p^+(D_{X_i \neq a}) \right)\]
\[  \left(  1 - \frac{\Pos({D}_{X_i = a})}{|D|}, \:\:\:\:\:\:   - \frac{\Pos({D}_{X_i \neq a})}{|D_{X_i \neq a}|}
\right) \]
\[ \left( \frac{\Neg({D}_{X_i = a}) + \Pos({D}_{X_i \neq a})}{|{D}|} , \:\:\:\:\:\:    - 1 \right) \]
\[\left( \frac{\Neg({D}_{X_i = a})}{|{D}|} , \:\:\:\:\:\: - 1 + \frac{  \Pos({D}_{X_i \neq a})}{|D_{X_i \neq a}|} \right) \]
\[\left( 0 , \:\:\:\:\:\:\:\:\:\:\:\:\:\:\:\:\:\:   p^+(D_{X_i \neq a}) - p^+(D_{X_i = a})\right)\]
\[\left(  \frac{\Pos({D}_{X_i = a})}{|D|}, \:\:\:\:\:\:\:\:\:\:\:\:\:\:\:\:\:\:     \frac{\Pos({D}_{X_i \neq a})}{|D_{X_i \neq a}|}\right)
\]
\[\left( \frac{\Pos({D}_{X_i = a}) + \Neg({D}_{X_i \neq a})}{|{D}|} , \:\:\:\:\:\:\:\:\:\:\:   1 \right)\]
\[\left( 1 - \frac{\Neg({D}_{X_i = a})}{|{D}|} , \:\:\:  
1 - \frac{\Pos({D}_{X_i \neq a})}{|D_{X_i \neq a}|} \right)
 \]
\end{proposition}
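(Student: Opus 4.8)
The plan is to collapse the (infinite) family of conceivable classifiers onto a finite-dimensional parametrisation and then recognise the attainable $(\Acc,\SPD)$ region as the image of a box under an affine map. I would partition the test set $D$ into four cells by the protected attribute and the true label, with sizes $P_a=\Pos(D_{X_i=a})$, $N_a=\Neg(D_{X_i=a})$, $P_{\bar a}=\Pos(D_{X_i\neq a})$, $N_{\bar a}=\Neg(D_{X_i\neq a})$, and write $m=|D_{X_i=a}|$, $n=|D_{X_i\neq a}|$, so $|D|=m+n$. The only thing about a classifier that influences \Acc and \SPD is how many instances of each cell it labels favourable; call these $\alpha\in[0,P_a]$, $\beta\in[0,N_a]$, $\gamma\in[0,P_{\bar a}]$, $\delta\in[0,N_{\bar a}]$. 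Then $\Acc=(\alpha-\beta+\gamma-\delta+N_a+N_{\bar a})/|D|$ and $\SPD=(\alpha+\beta)/m-(\gamma+\delta)/n$ are both affine in $(\alpha,\beta,\gamma,\delta)$. Hence the attainable set is the image of the box $[0,P_a]\times[0,N_a]\times[0,P_{\bar a}]\times[0,N_{\bar a}]$ under an affine map into the plane, which is a convex polygon, and it remains only to locate its vertices.

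Next I would exploit a symmetry to halve the work: flipping every prediction sends $(\alpha,\beta,\gamma,\delta)\mapsto(P_a-\alpha,N_a-\beta,P_{\bar a}-\gamma,N_{\bar a}-\delta)$ and therefore $(\Acc,\SPD)\mapsto(1-\Acc,-\SPD)$, so the region is centrally symmetric about $(\tfrac12,0)$ and points $5$–$8$ are the images of $1$–$4$. To trace the upper-right boundary I would freeze the favourable-prediction totals $u=\alpha+\beta$ and $v=\gamma+\delta$ (which freeze $\SPD=u/m-v/n$) and maximise \Acc. For fixed $u$ the largest attainable count of correct privileged predictions is the ``tent'' equal to $u+N_a$ when $u\le P_a$ and $2P_a+N_a-u$ when $u\ge P_a$ (label the true positives favourable first), peaking at $u=P_a$; the analogous statement holds for the unprivileged cell. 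Maximising the sum of two tents subject to the single linear constraint is a one-parameter piecewise-linear programme whose optimum is attained at a breakpoint. Starting at the joint peak $(u,v)=(P_a,P_{\bar a})$, which is the perfect classifier realising point $1$ with $\Acc=1$ and $\SPD=p^+(D_{X_i=a})-p^+(D_{X_i\neq a})$, and sweeping \SPD\ upward, raising $u$ costs $m/|D|$ accuracy per unit of \SPD\ while lowering $v$ costs $n/|D|$; saturating the cheaper variable first and then the other produces exactly two intermediate vertices on that side, and symmetrically two on the downward side.

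I would then certify each breakpoint by exhibiting its corner classifier and substituting the cell sizes. Point $7$ ($\SPD=1$) is ``all privileged favourable, all unprivileged unfavourable'', giving $\Acc=(P_a+N_{\bar a})/|D|$; point $8$ keeps the unprivileged predictions perfect while labelling all privileged favourable, giving $\SPD=1-P_{\bar a}/n$ and $\Acc=1-N_a/|D|$; point $3$ ($\SPD=-1$) reverses the two groups, and point $2$ keeps the unprivileged perfect while labelling all privileged unfavourable, giving $\SPD=-P_{\bar a}/n$ and $\Acc=1-P_a/|D|$. These reproduce the listed coordinates verbatim, and the flip symmetry of the previous paragraph supplies points $4$, $5$ and $6$ from $8$, $1$ and $2$.

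The hard part is not any single computation but the case distinction hidden in ``saturating the cheaper variable first'': which of $u,v$ is adjusted first is decided by comparing $m=|D_{X_i=a}|$ and $n=|D_{X_i\neq a}|$, and the two orderings give genuinely different intermediate vertices. The exact coordinates listed (for instance the appearance of $N_a/|D|$ rather than $P_{\bar a}/|D|$ in point $8$) single out one configuration, so the argument must pin down that the standing hypotheses force $m\le n$ there; if the privileged group is instead the larger one, a mirror-image octagon arises, and one must check the majority/favourable-class assumption together with the group-size convention selects the intended case. I would also treat the degenerate coincidence $m=n$, where two breakpoints merge and the octagon collapses to a hexagon, as a boundary case. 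Getting this bookkeeping exactly right, rather than the affine-image observation itself, is where the real care is needed.
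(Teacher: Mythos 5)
Your proof is correct and reaches the paper's eight vertices by a genuinely different route. The paper works constructively from the perfect classifier: it lists the four possible single-prediction flips, computes the SPD increment of each ($\pm 1/|D_{X_i=a}|$ or $\pm 1/|D_{X_i\neq a}|$, each costing $1/|D|$ of accuracy), greedily applies the cheapest useful flip type until it is exhausted, and reads off the intermediate vertices at the exhaustion points; the remaining four points are obtained ``mirrored'', and the case $|D_{X_i=a}|>|D_{X_i\neq a}|$ is dispatched by noting that the roles of the two groups swap. Your zonogon argument --- the attainable set is the image of the box $[0,P_a]\times[0,N_a]\times[0,P_{\bar a}]\times[0,N_{\bar a}]$ under an affine map, hence a convex polygon with at most eight vertices --- supplies precisely what the paper leaves implicit: the paper asserts ``we are exploring a convex octagon, so making a wrong choice does not give us the whole space'' without proving convexity, whereas in your setup convexity and the vertex count come for free before any computation, and the central symmetry $(\Acc,\SPD)\mapsto(1-\Acc,-\SPD)$ is an honest lemma replacing the paper's appeal to mirroring. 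The quantitative core is the same in both arguments (marginal costs $m/|D|$ versus $n/|D|$ of accuracy per unit of SPD, saturate the cheaper side first), and your certification of breakpoints by explicit corner classifiers recovers the same boundary models the paper constructs along its sweep.

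Two small corrections. First, the worry in your last paragraph is well-founded but resolves differently than you expect: the majority-class-favourable hypothesis does \emph{not} force $|D_{X_i=a}|\le|D_{X_i\neq a}|$ (in Adult the privileged groups, whites and males, are the larger ones), and the paper's own proof concedes the case split, stating that for $|D_{X_i=a}|>|D_{X_i\neq a}|$ one performs the flips in the other order and obtains the octagon with the two groups swapped. So the literal vertex list in the proposition corresponds to the configuration you identified, and your ``mirror-image octagon'' is the paper's official answer for the other case rather than a gap you must close yourself. Second, in the degenerate case $m=n$ the octagon collapses not to a hexagon but to a parallelogram: the two pairs of edge slopes $\pm|D|/m$ and $\pm|D|/n$ coincide \emph{simultaneously}, so points $1,2,3$ become collinear, as do $3,4,5$, then $5,6,7$ and $7,8,1$, leaving only four corners; the eight listed points still lie on the boundary, so nothing breaks. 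Neither point affects the validity of your argument.
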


The proof and a graphical representation of this octagon can be found in Appendix \ref{sec:proof}.

If we observe Table \ref{tab:PRE}, we can see that the SPD-accuracy space of the complete case can reach better values than the space configured by the dataset without missing values. This is even more extreme for the subset only including the missing values. A perfect classifier for these datasets could have almost perfect SPD for Adult and Race, and Recidivism and Sex. The perfect classifier for the subset of examples without missing values would be very unfair. So there are two problems with choosing this mutilated dataset. First, the perfect `target' that this dataset determines is wrong, because any model has to be evaluated with all the examples, not only a sample that is not chosen at random. Second, it will also lead to an unfair model. In the following sections we will explore the location of classifiers in this space and see how far we can approach the boundary of the possible space.

Independently of the explanation of the remarkable finding that the subset with missing values is fairer than the rest --for each and every of the six cases in Table \ref{tab:PRE}--, the main question is: if rows with missing values are fairer, why is everybody getting rid of them when learning predictive models? 
As we saw, some of the libraries seen in the previous section apply either $LD$ (e.g., AIF360, Fairness-Comparison) or $CD$ (Themis-ML), or they assume that the dataset is clean of missings (Aequitas). 
These libraries do this because many machine learning algorithms (and all fairness mitigation methods) cannot handle missing values. Apparently, $LD$ seems the easiest way of getting rid of them. However, do we have better alternatives? As we saw, imputation methods could replace the missing values and keep these rows, which seem to have less biased information, as we have seen in the previous six cases. Before exploring the results with $LD$ and a common imputation method, we need to explore how these missing values affect machine learning models and the fairness metrics of the trained models, in comparison with the original fairness metrics. In other words, do rows with missing values contribute to bias amplification more or less than the other rows? That is what we explore in the following section.

\section{Regaining the missing values for fairness}\label{sec:scenarios}

Since missing values are ugly and uncomfortable, they are eliminated in one way or another before learning can take place. Actually, most machine learning methods (or at least most of their off-the-shelf implementations) cannot handle missing values directly. There are a few exceptions, and analysing results with a method that does consider the missing values can shed light on the effect of missing values on fairness when learning predictive models. One such an exception is decision trees. During training, missing values are ignored when building the split at each node of the tree. During the application of the model, if a condition cannot be resolved because of a missing values, the example can go through some of the children randomly or can go through all of them and aggregate the probabilities of each class.  Another important reason why we choose decision trees for this first analysis is that they can become understandable (if of moderate size) and ultimately inspectable, which allows us to see what happens with the missing values and where unfairness is created. Also, the importance of each attribute can be derived easily from the tree.

In particular, we are using the classical Classification and Regression Trees (CART) \cite{breiman1984classification} in the implementation provided by the \emph{rpart} package \cite{therneau2015package}. This implementation treats missing values using \emph{surrogate splitting}, which allows the use of the values of other input variables to perform a split for observations with a missing value for the best (original) split (see section 5 in \cite{therneau1997introduction} for further information). Because the three datasets we are using are not very imbalanced (at most 76\% for Adult), and because Accuracy is the most common metric in many studies of fairness, we will stick to this performance metric. 

We separated a \% of the data for test from the original dataset, disregarding the existence of missing values. Consequently, this test dataset has a mixture of rows with and without missing values approximately equal as the whole dataset. Then, for training the decision tree, we used four different training sets. The ``all rows'' case used all the rows not used for test. The ``with \missing" case used the subset of these that have missing values. The ``without \missing'' case used the subset of the ``all rows'' training set whose rows do not have missing values. Finally, for comparison, we made a small sample of the latter of the same size of the training set ``with \missing''. We used 100 repetitions of the training/test split, 
where the fairness metrics (and the accuracies) are calculated with the test set labelled with the decision tree, and then averaged for the 100 repetitions. 
Table \ref{tab:IN} shows the results of CART decision trees for the test set and the six cases we are considering. We first compare the fairness results of the model over the data with all rows with the original fairness results of the dataset as we saw in Table \ref{tab:PRE}. We see that for Adult the bias is reduced (represented by \reduces) for Adult but worse for Recidivism and Titanic, for which bias is amplified (represented by \amplifies). 
 We think that this difference for Adult being biased in favour of the unprivileged class (as for the negative values of the SPD metric) may be do to a questionable choice of the favourable class here; if a model for predicting income is used, e.g., for granting a subsidy, being classified as earning $>$\$50k is unfavourable.  
When we look at other subsets, again comparing them with the data fairness in Table \ref{tab:PRE}, we see a similar picture for the subset without missing values, but almost the opposite situation with the subset with missing values. Again, the rows with missing values are having a very different behaviour.
Still, if we analyse which subset is best to get the least bias model, we see that 
learning from the rows with missing values is better for fairness. We added the sample at the rightmost columns (``sample w/o \missing'') to show whether this was caused by CART having shorter (and less biased) trees when the sample is small. We see this effect only partially. 
The size of the training sample does not explain many of the gains for the subset with missing values.

\begin{table}[ht]
\centering
\resizebox{1\textwidth}{!}{%
\begin{tabular}{lC{1.5cm}C{2.5cm}C{2.75cm}C{2.5cm}C{2.95cm}C{2.5cm}C{2.75cm}C{2.5cm}C{2.5cm}}\toprule

\textbf{dataset} & \textbf{protected attribute} & \textbf{Acc (all rows)}  & \textbf{SPD (all rows)} &  \textbf{Acc (with \missing)}  & \textbf{SPD (with \missing)}  & \textbf{Acc (w/o \missing)}  & \textbf{SPD (w/o \missing)} & \textbf{Acc (sample w/o \missing)}  & \textbf{SPD (sample w/o \missing)} \\ 
  \hline
\multirow{ 2}{*}{Adult}  
& Race &
0.8504 $\pm$ 0.0032 & \reduces0.0876 $\pm$ 0.0096 &
0.8264 $\pm$ 0.0075 & \amplifies\textbf{0.0768 $\pm$ 0.0179*} &
0.8502 $\pm$ 0.0031 & \reduces0.0881 $\pm$ 0.0102 &
0.8284 $\pm$ 0.0063 & -0.0888 $\pm$ 0.0163 \\ 
& Sex & 
0.8504 $\pm$ 0.0032 & \reduces0.1868 $\pm$ 0.0072 &
0.8264 $\pm$ 0.0075 & \amplifies\textbf{0.1643 $\pm$ 0.0220*} &
0.8502 $\pm$ 0.0031 & \reduces0.1885 $\pm$ 0.0072 & 
0.8284 $\pm$ 0.0063 & -0.1959 $\pm$ 0.0182\\ \hline
  
\multirow{ 2}{*}{Recidivism} 
& Race & 
0.6227 $\pm$ 0.0089 & \amplifies0.0982 $\pm$ 0.0289 & 
0.5549 $\pm$ 0.0165 & \reduces\textbf{0.0347 $\pm$ 0.0315*} & 
0.6214 $\pm$ 0.0114 & \amplifies0.1013 $\pm$ 0.0262 & 
0.5793 $\pm$ 0.0193 & 0.0509 $\pm$ 0.0404\\ 
& Sex & 
0.6227 $\pm$ 0.0089 & \amplifies0.1384 $\pm$ 0.0310 & 
0.5549 $\pm$ 0.0165 & \reduces\textbf{-0.0093 $\pm$ 0.0419*} & 
0.6214 $\pm$ 0.0114 & \amplifies0.1351 $\pm$ 0.0328 & 
0.5793 $\pm$ 0.0193 & 0.0383 $\pm$ 0.0403 \\ \hline
  
\multirow{ 2}{*}{Titanic}
& Class & 
0.7819 $\pm$ 0.0210 & \amplifies0.3507 $\pm$ 0.0701 & 
0.7113 $\pm$ 0.0296 & \amplifies\textbf{0.2875 $\pm$ 0.1213*} & 
0.7724 $\pm$ 0.0227 & \amplifies0.3641 $\pm$ 0.0766 & 
0.7451 $\pm$ 0.0282 & 0.2886 $\pm$ 0.1556 \\
  
& Sex & 
0.7819 $\pm$ 0.0210 & \amplifies0.6692 $\pm$ 0.0572 & 
0.7113 $\pm$ 0.0296 & \reduces\textbf{0.4418 $\pm$ 0.1308*} & 
0.7724 $\pm$ 0.0227 & \amplifies0.6471 $\pm$ 0.0670 & 
0.7451 $\pm$ 0.0282 & 0.6827 $\pm$ 0.1519\\ 
\toprule
\end{tabular}
}
\caption{Fairness metric (SPD) and accuracy (Acc) averaged for 100 repetitions using CART as predictive model. We show results for different subsets of data: all rows, only the rows with missing values (\missing), only the rows without the missing values and a sample of the latter of the same size. The symbols \amplifies{} and \reduces{} represent whether the bias has been amplified or reduced respectively in comparison to the corresponding columns in Table \ref{tab:PRE}. Bold figures represent the fairest result (closest to 0). The star symbol denotes statistical significance in a multiple pairwise-comparison between the means of the columns \emph{SPD (with \missing)}, \emph{SPD (w/o \missing)} and \emph{SPD (sample w/o \missing)}. 
}
\label{tab:IN}
\end{table}

Of course, this analysis disregards the performance of the model. We know that we can easily obtain a perfectly fair model by using trivial models such as the one that always predicts the majority class (with a baseline accuracy). 
Figure \ref{fig:IN} shows a bidimensional representation with the performance metric (accuracy) on the \yaxis and the fairness metric (SPD) on the \xaxis, where this majority class model has also been introduced. By looking at the trade-off between accuracy and SPD, the picture gets more complex but still more interesting. While the model only learning from the rows with missing values shows poor accuracy (because the sample is small), and it is usually out of a possible Pareto of accuracy and SPD, the result including all rows is usually the best in this Pareto. It always has the highest accuracy, and it is fairer than the subset without missing values for four of the six cases (Adult with race and sex, Recidivism with race, and Titanic with class). Although this is not conclusive evidence that including the rows with missing attributes is good, it is still indicative that ignoring them altogether may not be a good choice. We need to explore further. 

\begin{figure}[ht]
	\centering
    \includegraphics[width=1\columnwidth]{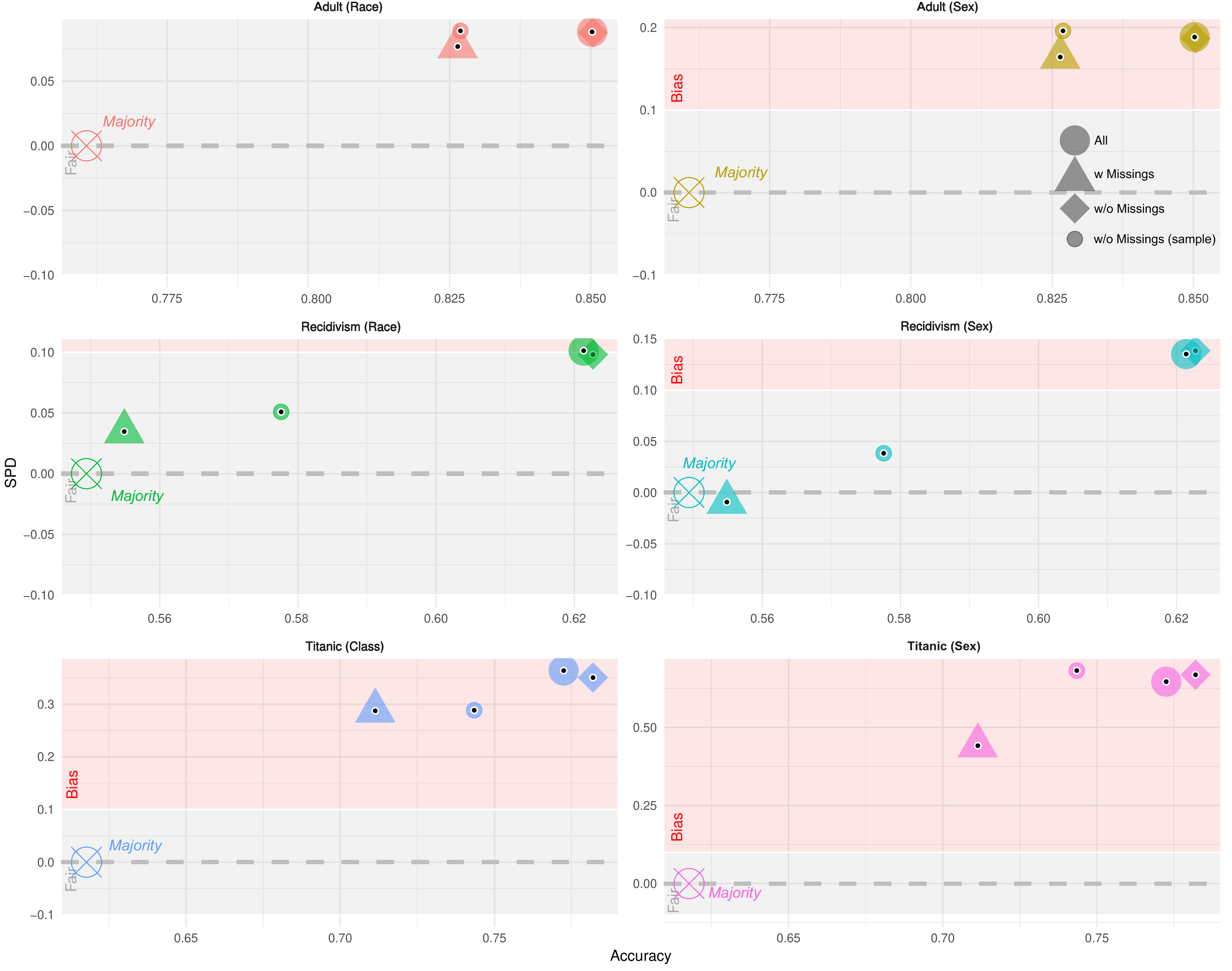}%
 \caption{Visualisation of the results from Table \ref{tab:IN} also including the majority class model. The \xaxis shows accuracy and the \yaxis shows SPD. The dashed grey line shows 0 SPD (no bias, perfect fairness), and the grey area goes between -0.1 to 0.1, a reasonably fair zone that helps see the magnitudes better. The stars are intended to denote statistical significance in a multiple pairwise-comparison between the means of the columns \emph{SPD (with \missing)}, \emph{SPD (w/o \missing)} and \emph{SPD (sample w/o \missing)}.
 }%
    \label{fig:IN}%
\end{figure}

\noindent In order to understand the effect of the missing values better, we are going to analyse the effect of those attributes that have missing values. Table \ref{tab:INcols} uses the same configuration as Table \ref{tab:IN}, except that the columns with at least one missing value were removed for training. The results are different, especially for Titanic, where bias is importantly reduced for Titanic and amplified for Sex, for all subsets and in comparison with the data and the model with all columns. Still, SPD for the subset with missing values is the best, as in the case for all columns. 
The surprise in this case comes when we look at accuracy, which is not strongly affected by removing the attributes (and in many cases the results are better).

\begin{table}[ht]
\centering
\resizebox{1\textwidth}{!}{%
\begin{tabular}{lC{1.5cm}C{2.5cm}C{2.75cm}C{2.5cm}C{2.85cm}C{2.5cm}C{2.75cm}C{2.5cm}C{2.5cm}}\toprule

\textbf{dataset} & \textbf{protected attribute} & \textbf{Acc (all rows)}  & \textbf{SPD (all rows)} &  \textbf{Acc (with \missing)}  & \textbf{SPD (with \missing)}  & \textbf{Acc (w/o \missing)}  & \textbf{SPD (w/o \missing)} & \textbf{Acc (sample w/o \missing)}  & \textbf{SPD (sample w/o \missing)} \\ 
  \hline
  
\multirow{ 2}{*}{Adult}  
& Race &  
0.8467 $\pm$ 0.0029 & \reduces0.0861 $\pm$ 0.0105 & 
0.8282 $\pm$ 0.0056 & \amplifies\textbf{0.0818 $\pm$ 0.0204*} & 
0.8463 $\pm$ 0.0030 & \reduces0.0861 $\pm$ 0.0105 & 
0.8260 $\pm$ 0.0055 & -0.0857 $\pm$ 0.0195\\  
& Sex & 
0.8467 $\pm$ 0.0029 & \reduces0.1854 $\pm$ 0.0072 & 
0.8282 $\pm$ 0.0056 & \amplifies\textbf{0.1605 $\pm$ 0.0203*} & 
0.8463 $\pm$ 0.0030 & \reduces0.1864 $\pm$ 0.0067 & 
0.8260 $\pm$ 0.0055 & 0.1943 $\pm$ 0.0182\\ \hline

\multirow{ 2}{*}{Recidivism} 
& Race  & 
0.6394 $\pm$ 0.0104 & \amplifies0.1250 $\pm$ 0.0299 & 
0.5333 $\pm$ 0.0218 & \reduces\textbf{0.0048 $\pm$ 0.0552*} & 
0.6374 $\pm$ 0.0103 & \amplifies0.1332 $\pm$ 0.0310 & 
0.5921 $\pm$ 0.0222 & 0.0563 $\pm$ 0.0514\\ 
& Sex  & 
0.6394 $\pm$ 0.0104 & \amplifies0.1393 $\pm$ 0.0374 & 
0.5333 $\pm$ 0.0218 & \reduces\textbf{-0.0105$\pm$ 0.0614*}  & 
0.6374 $\pm$ 0.0103 & \amplifies0.1448 $\pm$ 0.0358 & 
0.5921 $\pm$ 0.0222 & 0.0493 $\pm$ 0.0467 \\ \hline

\multirow{ 2}{*}{Titanic} 
& Class  & 
0.7862 $\pm$ 0.0179 & \reduces0.1666 $\pm$ 0.0634 & 
0.7478 $\pm$ 0.0266 & \reduces\textbf{0.1471 $\pm$ 0.0740*} & 
0.7827 $\pm$ 0.0180 & \reduces0.1758 $\pm$ 0.0646 & 
0.7665 $\pm$ 0.0268 & 0.2173 $\pm$ 0.1379\\
& Sex  & 
0.7862 $\pm$ 0.0179 & \amplifies0.9000 $\pm$ 0.0432 & 
0.7478 $\pm$ 0.0266 & \amplifies\textbf{0.7143 $\pm$ 0.1077*} & 
0.7827 $\pm$ 0.0180 & \amplifies0.8991 $\pm$ 0.0496 & 
0.7665 $\pm$ 0.0268 & 0.8769 $\pm$ 0.1284\\ \hline

\toprule
\end{tabular}
}
\caption{Results with the same configuration as Table \ref{tab:IN}, except that the columns with at least one missing value were removed for training (and hence not used by the tree during test either). Removed columns: Adult (\texttt{workclass}, \texttt{occupation}, \texttt{native\_country} ), Recidivsm (\texttt{days\_b\_screening\_arrest}, \texttt{c\_days\_from\_compas} and \texttt{c\_charge\_desc}) and Titanic (\texttt{age}, \texttt{fare} and \texttt{embarked}).}
\label{tab:INcols}
\end{table}

\comment{
\begin{figure}[ht]
	\centering
    \includegraphics[width=1\columnwidth]{Fairness_4ds_RemCols_rpart2_facet_100rep.pdf}%
 \caption{Result comparison from Table \ref{tab:INcols}. \todo{SPDs for recidivism have the wrong sign. Correct}}%
    \label{fig:INcols}%
\end{figure}
}

This suggests that including these rows, but treating the columns in a better way, could be beneficial. As we want to explore other machine learning methods, but most do not deal with missing values --and we must keep those rows--, we turn our analysis to imputation, in the next section.

\section{Treating missing values for fairness: delete or impute?}\label{sec:imputation}

Even if most libraries simply delete the rows with missing values, some imputation methods are so simple that it is difficult to understand why this option is not given in these packages (or included in the literature of fairness research). Fortunately, we can apply a preprocessing stage to every dataset with missing values where we can use any imputation method that we may have available externally. Nevertheless, imputation methods are not agnostic, and they can introduce bias as well, which can have an effect on fairness. As we saw in section \ref{sec:missing}, there are many reasons for missing values and some imputation methods (for instance those that impute using a predictive model) can amplify certain patterns, especially if the attributes with missing values are related to the class or the protected groups.  
In this first analysis we therefore prefer to use simple imputation methods, namely the one that replaces the missing value by the mean if it is a quantitative attribute and the mode if it is a qualitative attribute. 

Once the whole dataset has gone through this imputation method, we can now extend the number of machine learning methods that we can apply. We are using six machine learning techniques from the \emph{Caret} package \cite{kuhn2008building}: Logistic regression (LR), Naive Bayes (NB), Neural Network (NN), Random Forest (RF), the RPart decision tree (DT) and a support vector machine (SV) using a linear kernel. 

Figure \ref{fig:imputa} shows the results for the six cases we are considering in this paper with the six machine learning models, the majority class model (Majority
) model and a perfect model (Perfect). We have introduced this perfect model as a reference to enrich our analysis. Note that this is tantamount to what we did for Table 1, but for the test only (which explains why the values are roughly the same as those in the Table). These points are not achievable in practice (only theoretically), but help us to understand the bias that is already in the data, and how much this is amplified. We can compare these plots with the octagons (the space of possible classifiers, following proposition \ref{prop:octagon}) for each of the six combinations that can be shown as separate plots\footnote{We do not include the octagons in Figure \ref{fig:imputa} in order to keep the plots clean but also because the results in these plots are the average for several repetitions, while calculating means of octagons is not conceptually correct. Instead we show the octagons for the whole dataset in Appendix \ref{sec:proof}. With the separation, it is also easier to have in mind that the bounding octagons can have small variations for each of the partitions.} in Appendix \ref{sec:proof}. 
 This also reminds us that the perfect model is not unbiased. 

For each machine learning model in Figure \ref{fig:imputa} we show the results when trained by removing the rows with missing values (Deletion) and when trained after imputation (Imputation). 
Note that all models are evaluated with a test set that has applied imputation, otherwise we would not be able to compare all the options with the same data.

\begin{figure}[ht]
	\centering
    \includegraphics[width=1\columnwidth]{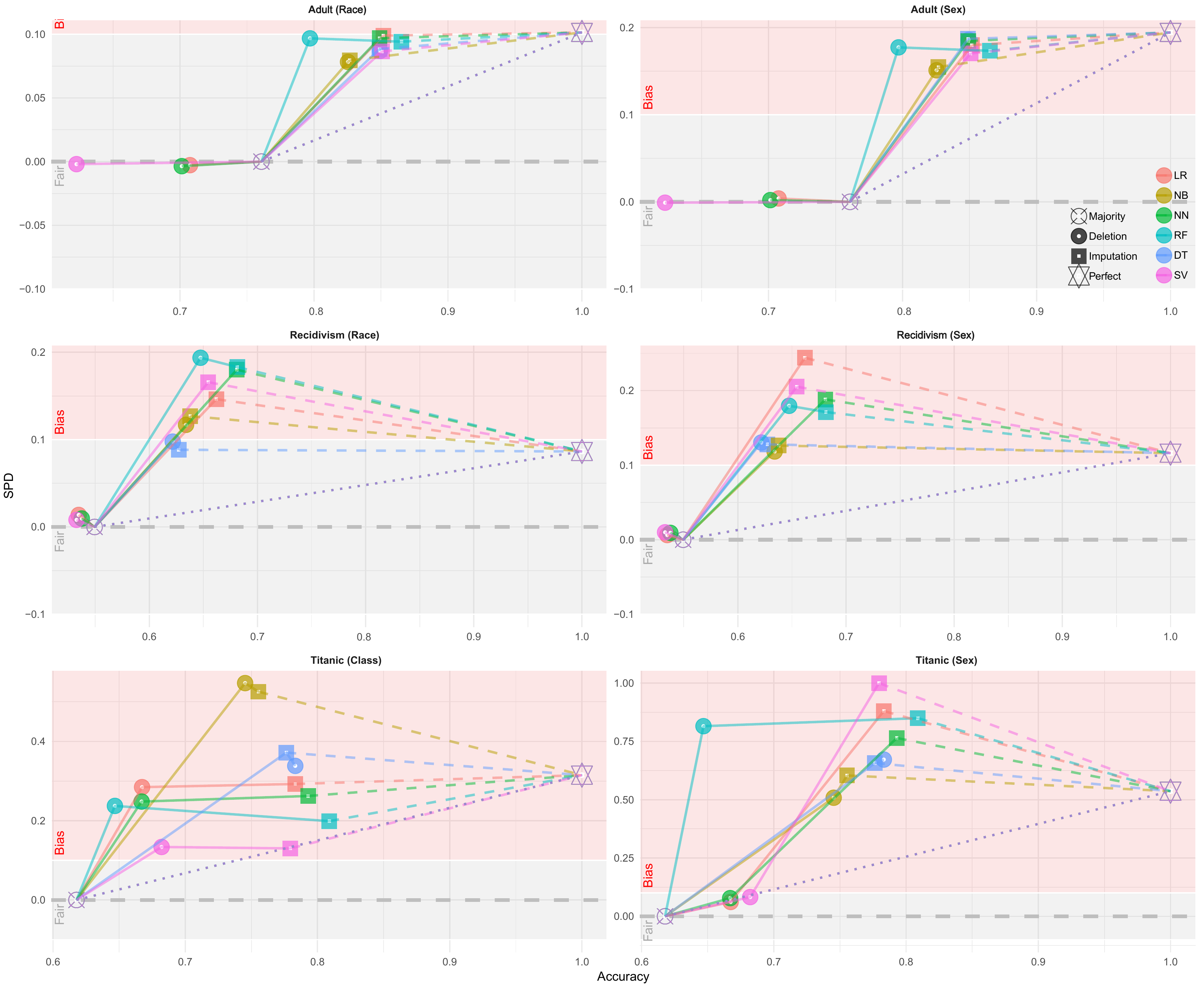}%
 \caption{Performance (accuracy) and Bias (SPD) for six machine learning models, using different subsets of data for training. Imputation: all data with the missing values being imputed. Deletion: only the data without the missing values). The test set is always with imputation. The plots also include two baselines: Majority (a majority class model) and Perfect (a perfect model, i.e., using the correct label from the test set). 
 }%
    \label{fig:imputa}%
\end{figure}

The first observation we can make is that in terms of accuracy, imputation is generally beneficial for all cases and almost all techniques. DT does not seem to get much from imputation, probably because using the mean or the mode for an attribute may strongly affect the distributions for the splits in those attributes. In terms of fairness, we see that for Adult all methods {\em reduce} bias from the perfect model, which is quite remarkable. An opposite result happens for Recidivism, and Titanic with Sex, while Titanic with Class is more mixed depending on the technique. When we look at the trade-off between fairness and performance, we see the expected trend of more performance implying less fairness, with the exception of Titanic with class. However, if look more carefully method by method, we see that RF behaves quite differently: using imputation increases accuracy significantly but does not amplify (and sometimes reduces) bias. Actually, RF is the only method that draws an almost straight line between the Deletion point, the Imputation point and the Perfect point.

If we look at the Pareto fronts (only built with the solid lines)
, SV and RF dominate for Adult; DT, LR and NN dominate for Recidivism with Race; RF for Recidivism with Race; SV and RF for Titanic with Class; and LR, SV, DT and RF for Titanic. Except for the latter case, the Pareto fronts are built exclusively from the results using the imputed dataset, not the dataset that deleted the rows with missing values. 

One thing that we can observe from all the plots is that most models are far (except  for SV at low accuracies in Titanic) from the segment joining the majority class model and the perfect model and hence very far from the boundary of ideal models that we showed in proposition \ref{prop:octagon}.

\section{Discussion}\label{sec:discussion}

We have presented the first comprehensive analysis of the effect of missing values on fairness. We investigated the causes of missing values and the causes of unfair decisions, and we saw many conceptual connections, or underlying causes for both. The surprising result was to find that, for the datasets studied in this paper, the examples with missing values seem to be fairer than the rest. Unfortunately, they are usually ignored in the literature, even if we can easily transform them from ugly to graceful if handled appropriately. Indeed, we have seen that they incorporate information that could be useful to limit the amplification of bias, but they can have a negative effect on accuracy too if not dealt with conveniently. As a result, it seems  that, for a good trade-off between performance and fairness, the imputation of the missing values instead of deleting them provides a wider range of results to choose from. 

While a predictive model can be trained on any particular subset of the data (e.g., excluding missing values), we need to evaluate any predictive model on {\em all} the examples (otherwise we would be cheating with the results and  the model would ultimate need to delegate these examples to humans). Consistently, if we want to use techniques that do not handle missing values --during deployment--, we need to delete the columns or use imputation methods. Still, we can use different techniques and learn the models without these rows, and see if they are better or worse than the models trained with the rows with imputation. This is exactly what we have done in the previous section and it is our recommendation as a general practice. Using these plots, we can locate the baseline classifiers and the space of possible classifiers, representing several techniques inside it. We can then have a clearer view of what to do for a particular problems, and which classifier we have to choose depending on the requirements of fairness (on the \yaxis) or performance (on the \xaxis).

This paper is just a first step in the analysis of the relation between missingness and fairness. Other imputation methods could be explored and they could be combined with mitigation methods. For this kind of extended study with so many new factors, we would need more datasets that are meaningful in terms of fairness issues and that contain missing values. We could also explore the generation of missing values, but in order to be meaningful, we would need to build a causal model and its relation to fairness. Similarly, there is a huge number of combinations of fairness metrics and performance metrics that could be explored. For instance, for very imbalanced datasets, it may be more interesting to at least compare the results with metrics such as AUC, and understand the space in this case. Note that this study should be accompanied by classifiers that predict scores or probabilities, jointly with a proper analysis of the relation between calibration and both fairness and missing values \cite{bella2010calibration}.

Of course more experimental analysis and new techniques could shed more light to the question. However, given the complexity of both phenomena (missing values and fairness) and their interaction, it is quite unlikely that a new (predictive) method of imputation works well for all techniques and all possible bias mitigation methods (and all the fairness and performance metrics). It is much more plausible that we need to analyse the results for a particular dataset, favourable class and protected groups, and find the best compromise on the Pareto front between the chosen performance and fairness metrics, as we have illustrated with this paper. In the end, fairness is a delicate issue for which context insensitive modelling pipelines may miss relevant information and dependencies for a particular problem. For the moment, we have learnt that ignoring the missing values is a rather ugly practice for fairness. 

\section*{Acknowledgements}

This material is based upon work supported by the EU (FEDER), and the Spanish MINECO under grant TIN 2015-69175-C4-1-R and RTI2018- 094403-B-C32, the Generalitat Valenciana PROMETEOII/2015/013 and PROMETEO/2019/098. F. Mart\'{\i}nez-Plumed was also supported by INCIBE (Ayudas para la excelencia de los equipos de investigaci\'on avanzada en ciberseguridad), the European Commission (Joint Research Centre) HUMAINT project (Expert Contract CT-EX2018D335821-101), and Universitat Polit\`ecnica de Val\`encia (Primeros Proyectos de lnvestigaci\'on PAID-06-18). J. Hernández-Orallo is also funded by an FLI grant RFP2-152.

\includepdf[pages=1-3]{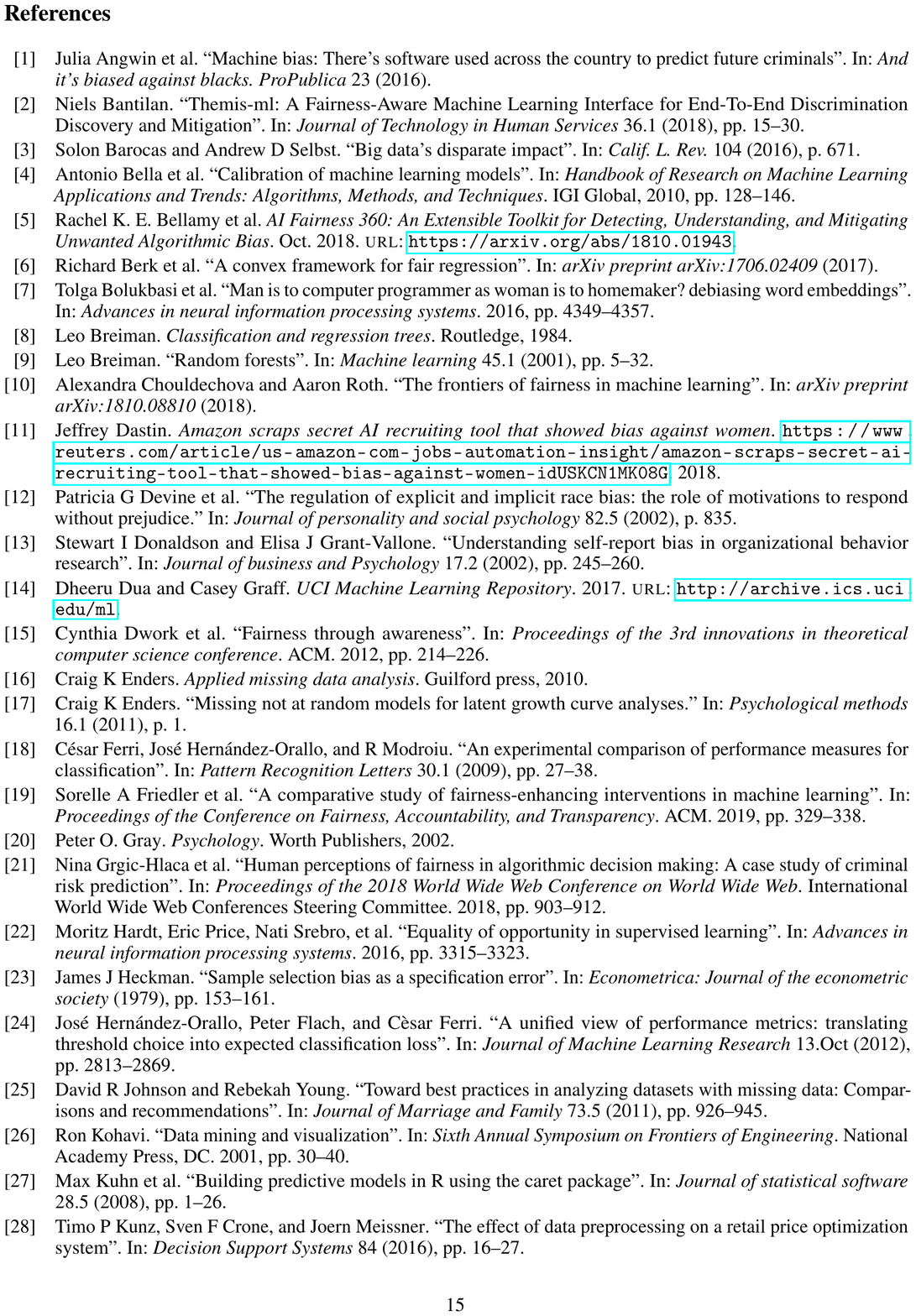}


\newpage
\appendix

\section{The SPD-accuracy  space}\label{sec:proof}

The following proposition shows the space for the possible classifiers in terms of the balance between SPD and Accuracy.

\begin{propositionnonumber} (Proposition \ref{prop:octagon} in the paper)
Assuming that the majority class is the favourable class, the set of classifiers with respect to Accuracy and SPD is bounded by the octagon composed of the following eight points:
\[\Acc = 1, \:\:\:\:\:\: \SPD^+_i = p^+(D_{X_i = a}) - p^+(D_{X_i \neq a}) \]



\[  \Acc =  1 - \frac{\Pos({D}_{X_i = a})}{|D|}, \:\:\:\:\:\:  \SPD^+_i =  - \frac{\Pos({D}_{X_i \neq a})}{|D_{X_i \neq a}|}
\]

\[ \Acc = \frac{\Neg({D}_{X_i = a}) + \Pos({D}_{X_i \neq a})}{|{D}|} , \:\:\:\:\:\:  \SPD^+_i  =  - 1 \]

\[\Acc = \frac{\Neg({D}_{X_i = a})}{|{D}|} , \:\:\:\:\:\: \SPD^+_i = - 1 + \frac{  \Pos({D}_{X_i \neq a})}{|D_{X_i \neq a}|} \]

\[\Acc = 0 , \:\:\:\:\:\: \SPD^+_i =  p^+(D_{X_i \neq a}) - p^+(D_{X_i = a})\]

\[  \Acc =  \frac{\Pos({D}_{X_i = a})}{|D|}, \:\:\:\:\:\:  \SPD^+_i =   \frac{\Pos({D}_{X_i \neq a})}{|D_{X_i \neq a}|}
\]

\[ \Acc = \frac{\Pos({D}_{X_i = a}) + \Neg({D}_{X_i \neq a})}{|{D}|} , \:\:\:\:\:\:  \SPD^+_i  =  1 \]

\[ \Acc = 1 - \frac{\Neg({D}_{X_i = a})}{|{D}|} , \:\:\:\:\:\:  \SPD^+_i  =  
1 - \frac{\Pos({D}_{X_i \neq a})}{|D_{X_i \neq a}|}
 \]


\end{propositionnonumber}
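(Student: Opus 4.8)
The plan is to reduce an arbitrary classifier to the only four quantities on which Accuracy and $\SPD^+_i$ actually depend, and then to recognise the resulting feasible region as a zonotope. Partition $D$ into the four cells determined by group ($X_i=a$ versus $X_i\neq a$) and true class (favourable versus unfavourable), and let $t_1,f_1$ be the numbers of privileged positives and privileged negatives that a classifier predicts as favourable, and $t_2,f_2$ the corresponding counts in the unprivileged group. For our two metrics every classifier is determined by the point $(t_1,f_1,t_2,f_2)$ in the box $[0,\Pos(D_{X_i=a})]\times[0,\Neg(D_{X_i=a})]\times[0,\Pos(D_{X_i\neq a})]\times[0,\Neg(D_{X_i\neq a})]$, since swapping which examples inside a cell are flipped changes neither metric. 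First I would write both metrics as affine functions of these counts: the predicted favourable rates are $p^+(\hat D_{X_i=a})=(t_1+f_1)/|D_{X_i=a}|$ and $p^+(\hat D_{X_i\neq a})=(t_2+f_2)/|D_{X_i\neq a}|$, so $\SPD^+_i=(t_1+f_1)/|D_{X_i=a}|-(t_2+f_2)/|D_{X_i\neq a}|$, while the number of correct predictions equals $t_1+t_2+(\Neg(D_{X_i=a})-f_1)+(\Neg(D_{X_i\neq a})-f_2)$, giving $\Acc$ after dividing by $|D|$.

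Next I would observe that the joint map $(t_1,f_1,t_2,f_2)\mapsto(\Acc,\SPD^+_i)$ is affine, so the image of the box is a zonotope: a translate of the Minkowski sum of four segments whose directions are the $(\Acc,\SPD)$-gradients of the four counts, namely $(\frac{1}{|D|},\frac{1}{|D_{X_i=a}|})$ for $t_1$, $(-\frac{1}{|D|},\frac{1}{|D_{X_i=a}|})$ for $f_1$, $(\frac{1}{|D|},-\frac{1}{|D_{X_i\neq a}|})$ for $t_2$, and $(-\frac{1}{|D|},-\frac{1}{|D_{X_i\neq a}|})$ for $f_2$. These four directions point into the four distinct open quadrants, hence are pairwise non-parallel, so (provided all four cells are non-empty) the zonotope is a centrally symmetric convex polygon with exactly $2\times4=8$ vertices, i.e.\ an octagon. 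Containment of \emph{every} classifier in this octagon is then immediate, since each $(\Acc,\SPD^+_i)$ is an affine image of a point of the box and thus lies in the convex hull of the eight vertex images.

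It remains to pin down the eight vertices, and here I would use that each vertex of a zonotope is attained at a corner of the box, i.e.\ by a classifier committing each cell entirely to ``predict favourable'' or ``predict unfavourable''. Four corners give the extreme values directly: the perfect classifier ($t_1=\Pos(D_{X_i=a})$, $f_1=0$, $t_2=\Pos(D_{X_i\neq a})$, $f_2=0$) gives $\Acc=1$ with $\SPD^+_i=p^+(D_{X_i=a})-p^+(D_{X_i\neq a})$; the all-wrong classifier gives $\Acc=0$; predicting the privileged group all-favourable and the unprivileged group all-unfavourable gives $\SPD^+_i=1$; and the reverse gives $\SPD^+_i=-1$. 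The remaining four come from ``mixed'' corners that commit one group fully while classifying the other correctly (e.g.\ privileged all-unfavourable, unprivileged correct, giving $\Acc=1-\Pos(D_{X_i=a})/|D|$ and $\SPD^+_i=-\Pos(D_{X_i\neq a})/|D_{X_i\neq a}|$). Direct evaluation of $\Acc$ and $\SPD^+_i$ at these eight corners reproduces the eight listed points, and the central symmetry of the zonotope about its centre $(\frac{1}{2},0)$ (the midpoint configuration, where each group has predicted favourable rate $\frac{1}{2}$) pairs the vertices antipodally and halves the verification.

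The main obstacle I anticipate is not any single computation but establishing that these particular eight corners are \emph{exactly} the vertices, in the correct cyclic order, with no extra ones: this is where the angular ordering of the four generator directions must be checked so that the Minkowski sum is traversed consistently, and where non-degeneracy must be confirmed (no segment of zero length, i.e.\ no empty cell, which would collapse the octagon to a hexagon). A secondary, minor caveat is integrality: the counts are integers, so the truly achievable points form a fine lattice inside the octagon rather than the full continuum; but since the statement only asserts that classifiers are \emph{bounded by} the octagon, the convexity argument already suffices, while the vertex classifiers show the bound is tight. Finally, I would note that the zonotope argument does not itself use the hypothesis that the majority class is favourable; that assumption only fixes which baseline (the majority-class predictor) sits at the fair point $\SPD^+_i=0$, and can be carried along without affecting the octagon's shape.
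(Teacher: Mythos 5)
Your proof is correct, but it takes a genuinely different route from the paper. The paper's proof is a constructive boundary walk: it starts at the perfect classifier, observes that each single prediction flip changes $\SPD^+_i$ by $\pm\frac{1}{|D_{X_i=a}|}$ or $\pm\frac{1}{|D_{X_i\neq a}|}$ and $\Acc$ by $\frac{1}{|D|}$, and then greedily orders the four kinds of flips (case-splitting on the sign of $\SPD^+_i$ and on whether $|D_{X_i=a}|\lessgtr|D_{X_i\neq a}|$) to trace the extreme path from one vertex to the next, obtaining the remaining vertices ``by symmetry.'' You instead reduce every classifier to the four counts $(t_1,f_1,t_2,f_2)$, note that $(\Acc,\SPD^+_i)$ is an affine function of these counts on a box, and identify the image as a zonotope generated by four segments; the eight listed points are then the images of eight box corners, and containment of all classifiers is automatic from convexity. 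Your approach buys rigour exactly where the paper is informal: the paper never really proves that its greedy path is extremal or that the region is convex (it asserts ``we are exploring a convex octagon''), whereas in your argument convexity and the boundedness claim come for free from the affine-image structure, the case analysis disappears, and the central symmetry about $(\frac{1}{2},0)$ is explained rather than invoked. The paper's walk, in exchange, is more elementary and makes operationally vivid which predictions one flips to move along the boundary.

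One slip worth fixing: your claim that the four generator directions lie in four distinct open quadrants and are ``hence pairwise non-parallel'' is not valid for the two opposite-quadrant pairs. Indeed $\bigl(\frac{1}{|D|},\frac{1}{|D_{X_i=a}|}\bigr)$ and $\bigl(-\frac{1}{|D|},-\frac{1}{|D_{X_i\neq a}|}\bigr)$ span the same line precisely when $|D_{X_i=a}|=|D_{X_i\neq a}|$, and likewise for the other opposite pair, in which case the zonotope degenerates to a parallelogram with the points indexed $2,4,6,8$ lying on its edges rather than at strict vertices. This does not break the proposition --- the convex hull of the eight points still bounds the classifier space, which is all that is asserted --- but your non-degeneracy discussion should add the condition $|D_{X_i=a}|\neq|D_{X_i\neq a}|$ alongside the non-emptiness of the four cells if you want exactly eight vertices. (The paper's proof quietly has the same degenerate case: with equal group sizes its two greedy orderings yield collinear segments.)
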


\begin{proof}
As we have assumed that the majority class is the favourable class, we have the following point for the perfect model:


\[\Acc[\mbox{Perfect}] = 1 \]
\[\SPD^+_i[\mbox{Perfect}] = p^+(D_{X_i = a}) - p^+(D_{X_i \neq a}) \]




We denote by $\hat{D}$ the data labelled with a classifier, so it will have accuracy and SPD as follows.

\[\Acc(\hat{D}) = \frac{\TP(\hat{D}) + \TN(\hat{D})}{|{D}|}
=  \frac{\TP(\hat{D}_{X_i = a}) + \TN(\hat{D}_{X_i = a}) + \TP(\hat{D}_{X_i \neq a}) + \TN(\hat{D}_{X_i \neq a})}{|{D}|} 
\]

\begin{eqnarray*}
\SPD^+_i(\hat{D}) & = & p^+(\hat{D}_{X_i = a}) - p^+(\hat{D}_{X_i \neq a}) = \frac{\TP(\hat{D}_{X_i = a}) + \FP(\hat{D}_{X_i = a})}{|D_{X_i = a}|} - \frac{\TP(\hat{D}_{X_i \neq a}) + \FP(\hat{D}_{X_i \neq a})}{|D_{X_i \neq a}|} 
\end{eqnarray*}







Let us start with the perfect classifier, and let us assume wlog that $\SPD^+_i$ is positive. 

{\bf SPD positive}

We consider that the favourable class is the positive class, so we have four cases in which a prediction can change:

\begin{enumerate}
    \item In $D_{X_i = a}$, a TN changes to a FP. This  increments $\SPD^+_i$ in $\frac{1}{|D_{X_i = a}|}$.
        \item In $D_{X_i = a}$, a TP changes to a FN. This  decrements $\SPD^+_i$ in $\frac{1}{|D_{X_i = a}|}$.
            \item In $D_{X_i\neq a}$, a TN changes to a FP. This  decrements $\SPD^+_i$ in $\frac{1}{|D_{X_i \neq a}|}$.
        \item In $D_{X_i \neq a}$, a TP changes to a FN. This  increments $\SPD^+_i$ in $\frac{1}{|D_{X_i \neq a}|}$.
\end{enumerate}
 
As $\SPD^+_i$ is positive and we want to reduce it, then only the mistakes 2 and 3 are useful. If  $|D_{X_i = a}| < |D_{X_i \neq a}|$ then it is more advantageous to do 2. Otherwise, it is more advantageous to do 3 first.

{\bf SPD positive and $|D_{X_i = a}| < |D_{X_i \neq a}|$}

In this situation the strategy consists in moving one by one all the elements from TP to FN (exactly all the TP instances in ${D}_{X_i = a}$, which are simply the positives as we started with the perfect classifier), so that we move to a point that is located at:

\[  \Acc =  \Acc[Perfect] - \frac{\Pos({D}_{X_i = a})}{|D|} = 1 - \frac{\Pos({D}_{X_i = a})}{|D|} \]

\[  \SPD^+_i =  \SPD^+_i [Perfect] - \frac{  \Pos({D}_{X_i = a})}{|D_{X_i = a}|} 
= - \frac{\Pos({D}_{X_i \neq a})}{|D_{X_i \neq a}|}
\]

As we did not have FP nor FN, because we started from the perfect classifier, and all the TP for $X=a$ have been converted into FN, we only have TN and FN for $X=a$, which always predicts the negative class when $X=a$ and always correct when $X \neq a$. 

Then, when case 2 is exhausted, we can start with case 3 and move one by one all the elements from TN to FP (exactly all the original TN instances in ${D}_{X_i \neq a}$, which are the negatives), and we get the point:

\begin{eqnarray*}
\Acc & = &  \Acc[Perfect] - \frac{\Pos({D}_{X_i = a}) + \Neg({D}_{X_i \neq a})}{|D|} =  1 -  \frac{\Pos({D}_{X_i = a}) + \Neg({D}_{X_i \neq a})}{|D|}  \\
& = &  \frac{\Neg({D}_{X_i = a}) + \Pos({D}_{X_i \neq a})}{|{D}|} \end{eqnarray*}

\begin{eqnarray*}
  \SPD^+_i & = & \SPD^+_i [Perfect] - \frac{  \Pos({D}_{X_i = a})}{|D_{X_i = a}|} - \frac{  \Neg({D}_{X_i \neq a})}{|D_{X_i \neq a}|} \\
  & = & - \frac{\Pos({D}_{X_i \neq a})}{|D_{X_i \neq a}|} - \frac{  \Neg({D}_{X_i \neq a})}{|D_{X_i \neq a}|} \\
  & = & - 1
\end{eqnarray*}

Again, the last step is explained as we did not have FP nor FN, because we started from the perfect classifier, and all the TP for $X=a$ have been converted into FN, so we only have TN and FN for $X=a$, which means it always predicts the negative class when $X=a$. As all the TN for $X \neq a$ have been converted into FP, we only have TP and FP for $X \neq a$, which means it always predicts the positive class when $X\neq a$. This is the most negative bias, $-1$.

Note that as the value of SPD is negative already after all the 2s, we know that perfect fairness for SPD has been achieved before exhausting the 2s. 
However, the points on the segments connecting these points and the original perfect classifier do not need to contain the majority classifier, which will usually be left out of these two segments.

And now if we want to explore the other two cases, we are at $\SPD=-1$ and we can only increase. If  $|D_{X_i = a}| < |D_{X_i \neq a}|$ then now we want to increase $\SPD$ as slowly as possible to cover the whole space, so it is now more advantageous to do 4 as the steps are smaller.

As in the previous cases, now we change TP to FN, so it is the positives in $D_{X_i \neq a}$ that we lose, so incrementally over the previous point this would lead to:

\begin{eqnarray*}
\Acc & = &  \frac{\Neg({D}_{X_i = a}) + \Pos({D}_{X_i \neq a}) - \Pos({D}_{X_i \neq a})}{|{D}|} \\
  & = & \frac{\Neg({D}_{X_i = a})}{|{D}|} 
  \end{eqnarray*}

\begin{eqnarray*}
  \SPD^+_i  & = & - 1 + \frac{  \Pos({D}_{X_i \neq a})}{|D_{X_i \neq a}|}
\end{eqnarray*}

The other four points of the octagon are mirrored with the above (symmetric with the limits of \Acc and \SPD respectively).

{\bf SPD positive and $|D_{X_i = a}| > |D_{X_i \neq a}|$}

The situation when  $|D_{X_i = a}| > |D_{X_i \neq a}|$ follows similarly but does 2 before 3 and then gets $D_{X_i \neg a}$ and $D_{X_i = a}$ swapped. 
Similarly for the rest of the octagon. Basically, we are exploring a convex octagon, so making a wrong choice does not give us the whole space (there would be concave parts).

{\bf SPD negative}
Finally, the scenario when SPD of the perfect classifier is negative is analogous.
\end{proof}

An example of the octagonal SPD-Accuracy space for a test dataset is included in Figure \ref{fig:octa}. The dataset presents the following metrics: $ \SPD^+_i= 0.3$,  $p^+(D_{X_i = a})=0.8$,  $p^+(D_{X_i \neq a})=0.5$ and $|D_{X_i = a}| /|D|=0.3$. Figure \ref{fig:octa_datasets} shows the corresponding SPD-Accuracy spaces for the six datasets and protected groups included in Table \ref{tab:PRE}.

\begin{figure}[ht]
	\centering
    \includegraphics[width=0.8\columnwidth]{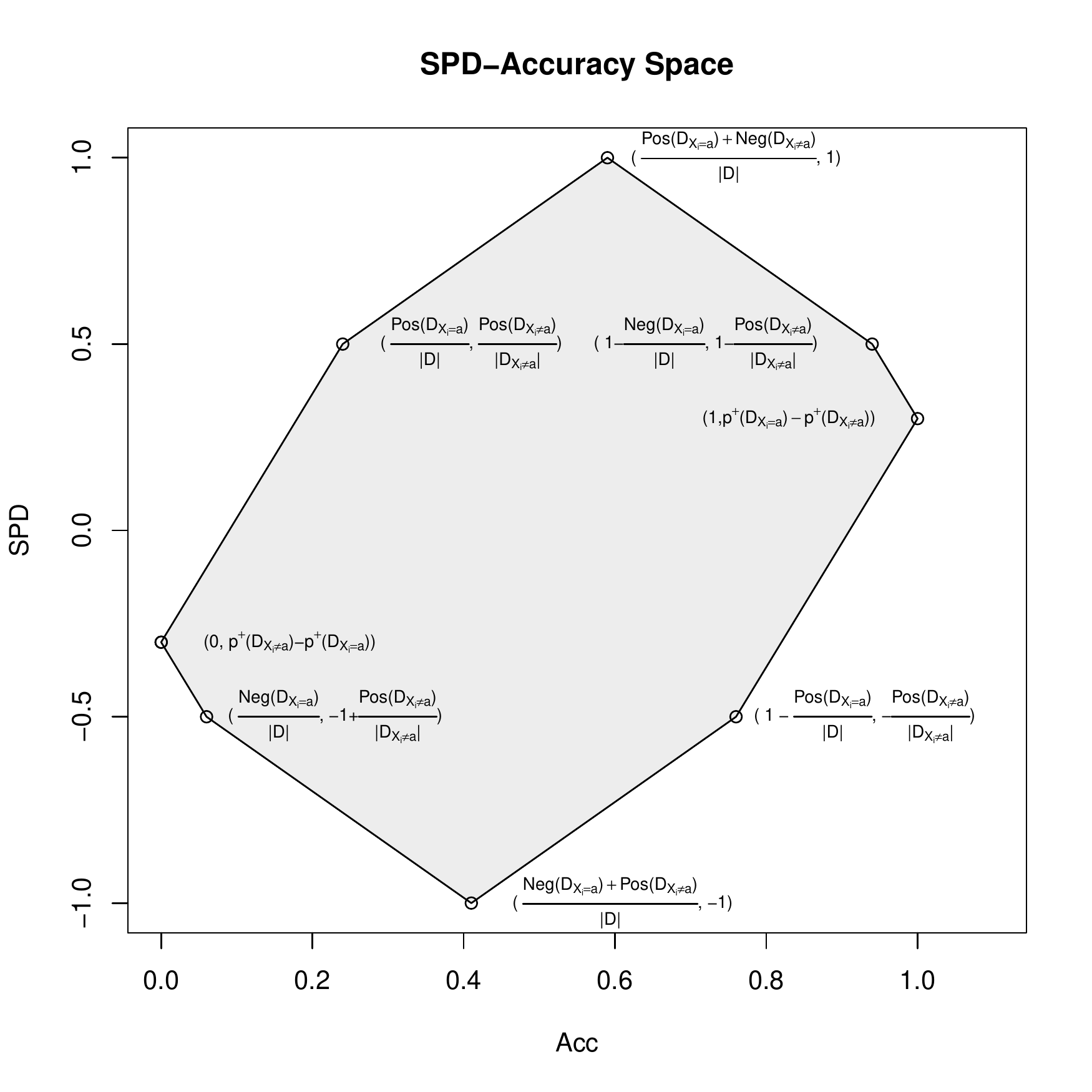}%
 \caption{SPD-accuracy space for a dataset with $ \SPD^+_i= 0.3$,  $p^+(D_{X_i = a})=0.8$,  $p^+(D_{X_i \neq a})=0.5$ and $|D_{X_i = a}| /|D|=0.3$.
 }%
    \label{fig:octa}%
\end{figure}

\begin{figure}[ht]
	\centering
    \includegraphics[width=0.45\columnwidth]{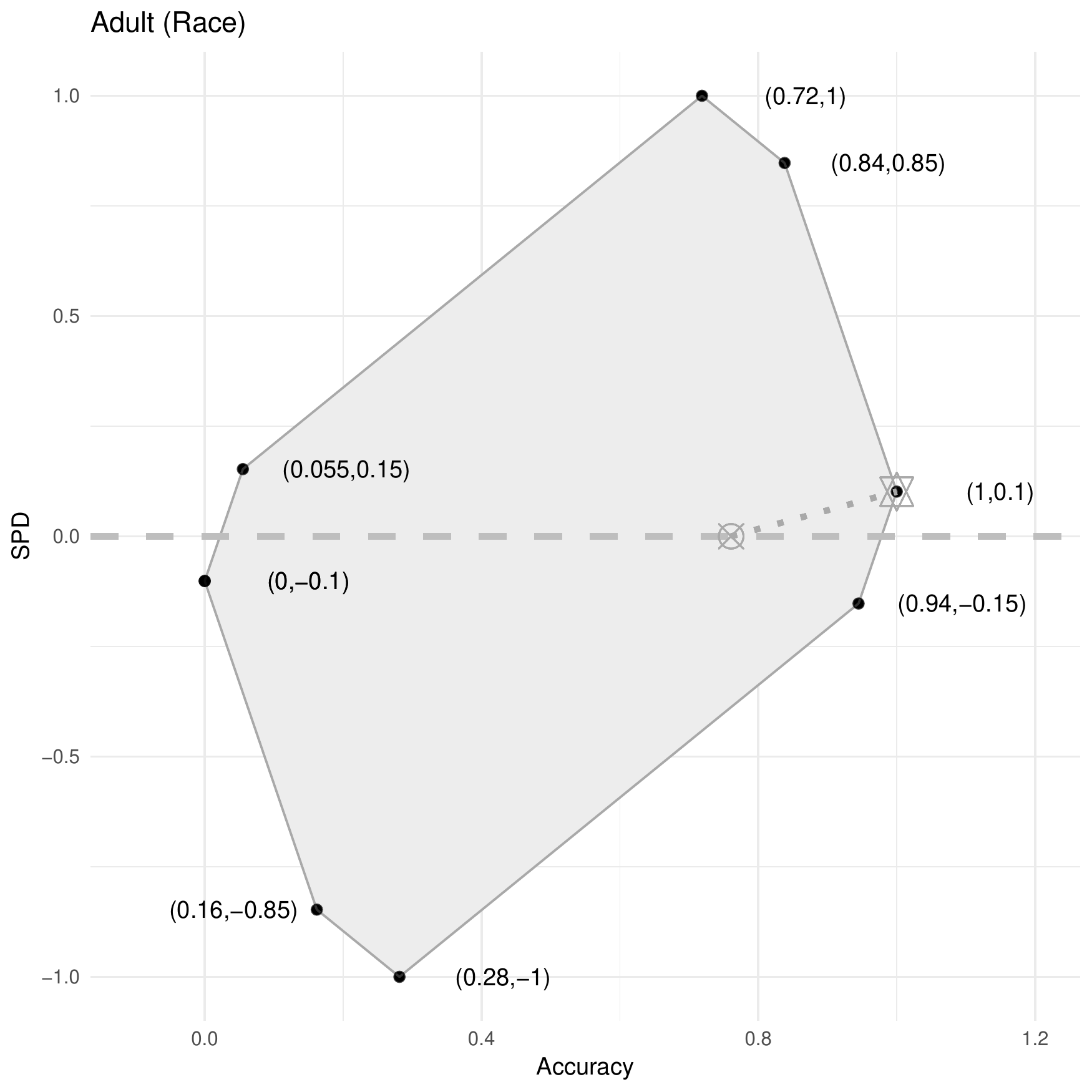}
    \includegraphics[width=0.45\columnwidth]{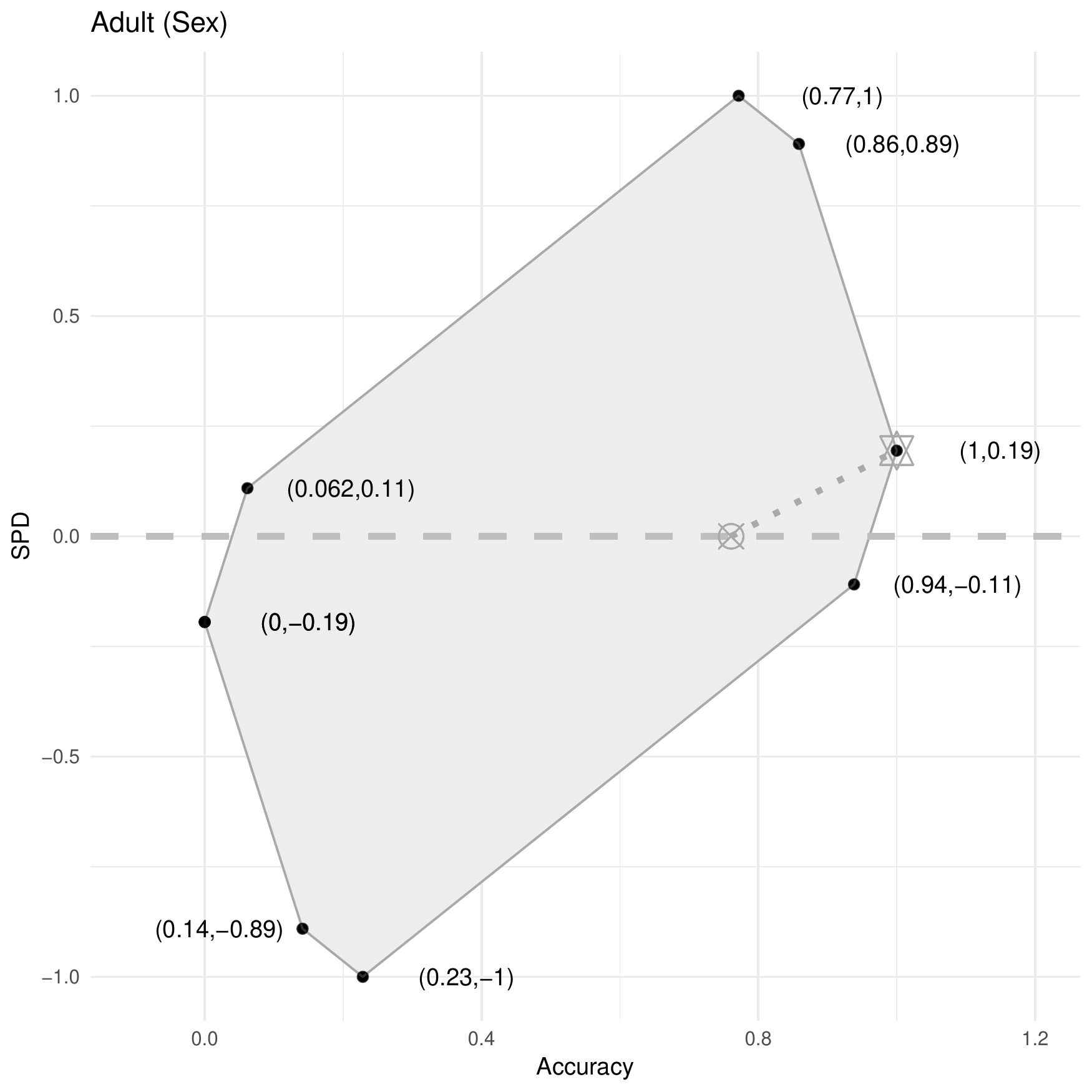}
    \includegraphics[width=0.45\columnwidth]{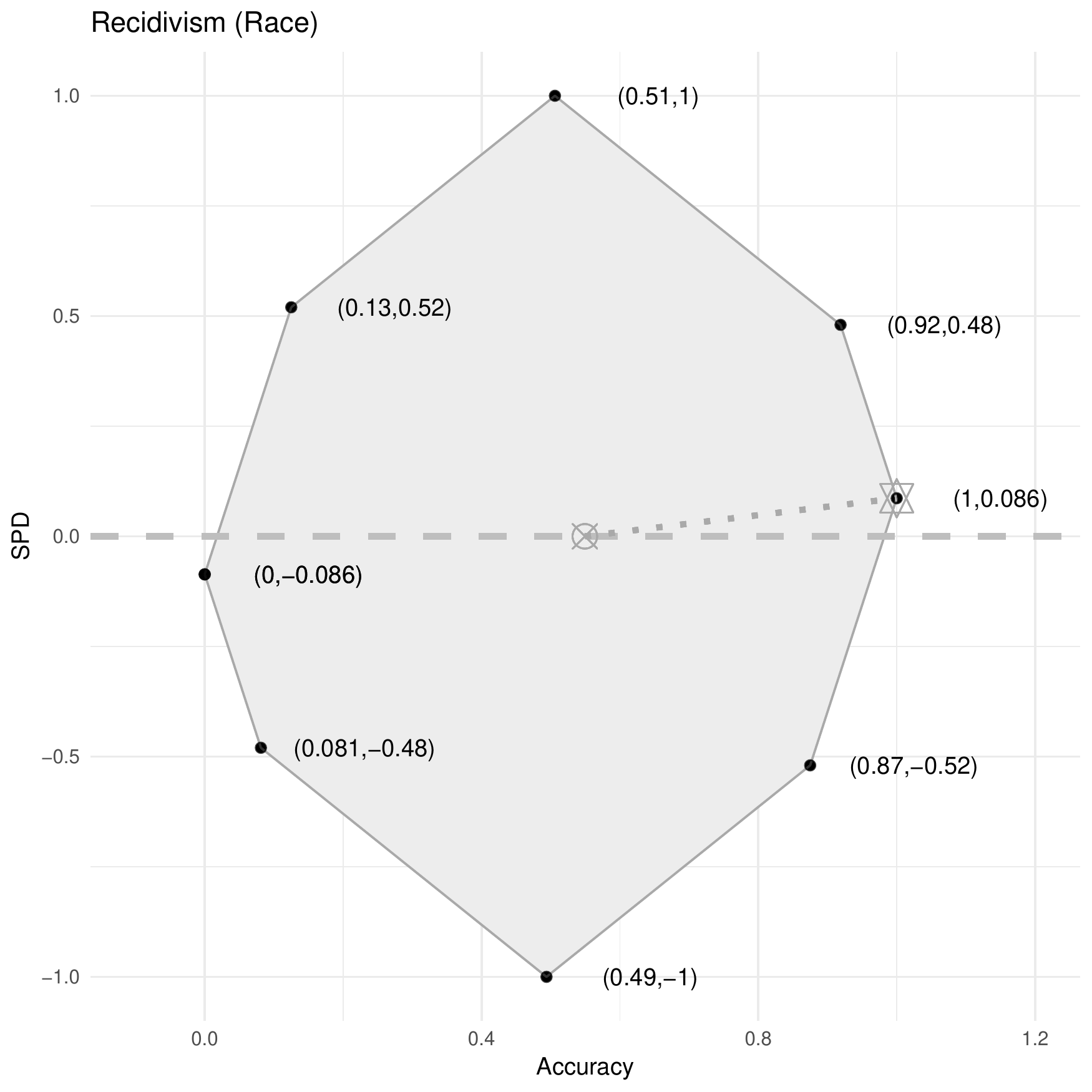}
    \includegraphics[width=0.45\columnwidth]{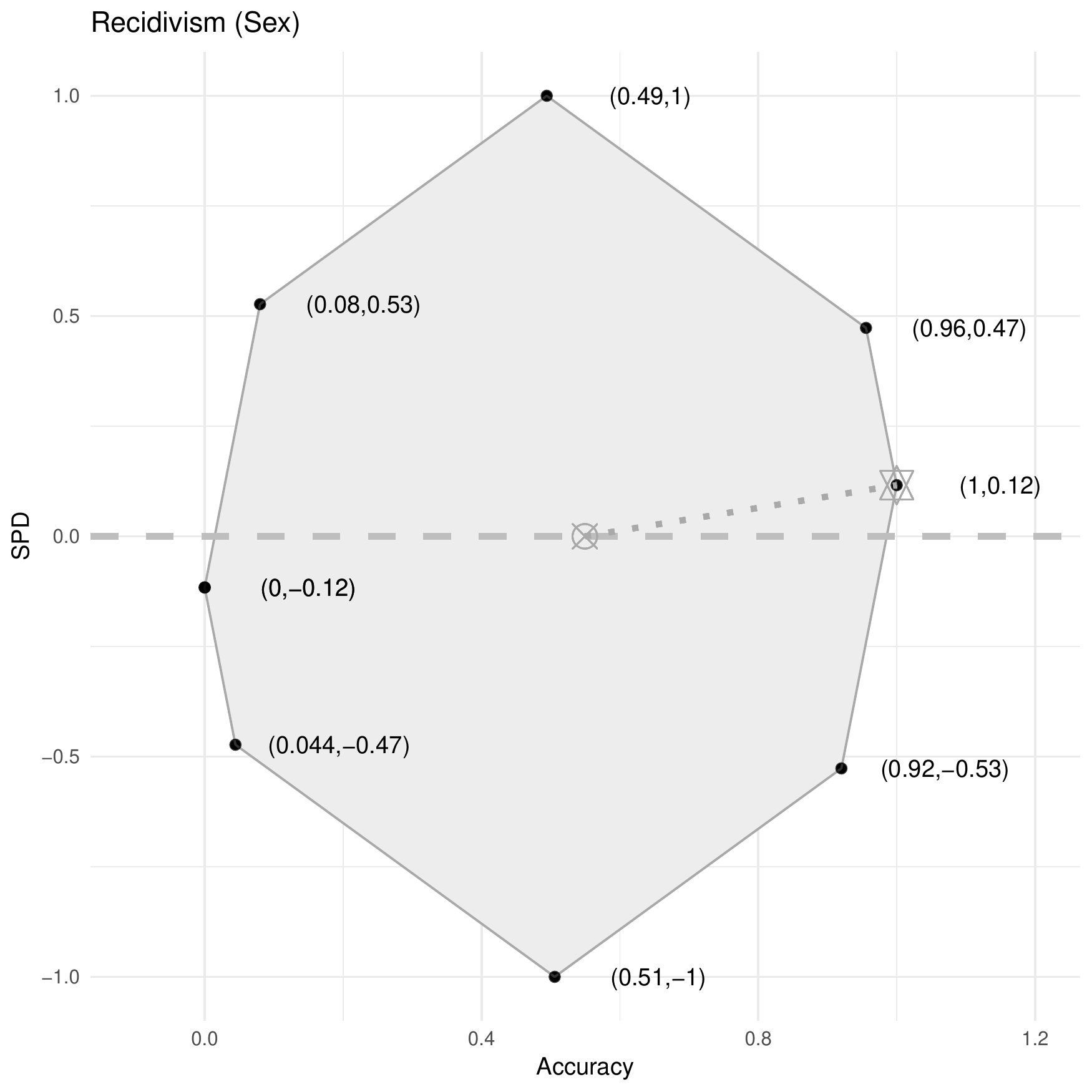}
    \includegraphics[width=0.45\columnwidth]{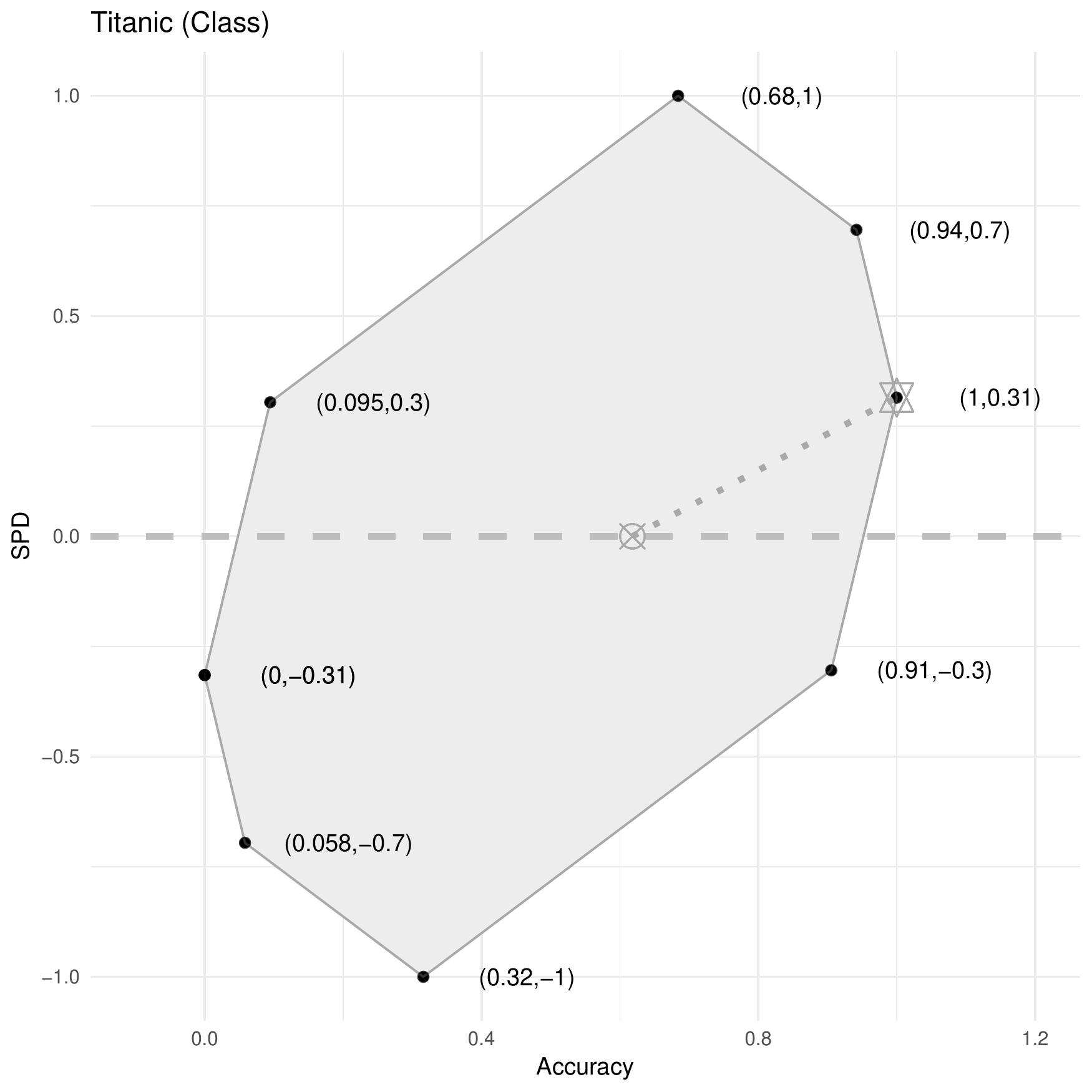}
    \includegraphics[width=0.45\columnwidth]{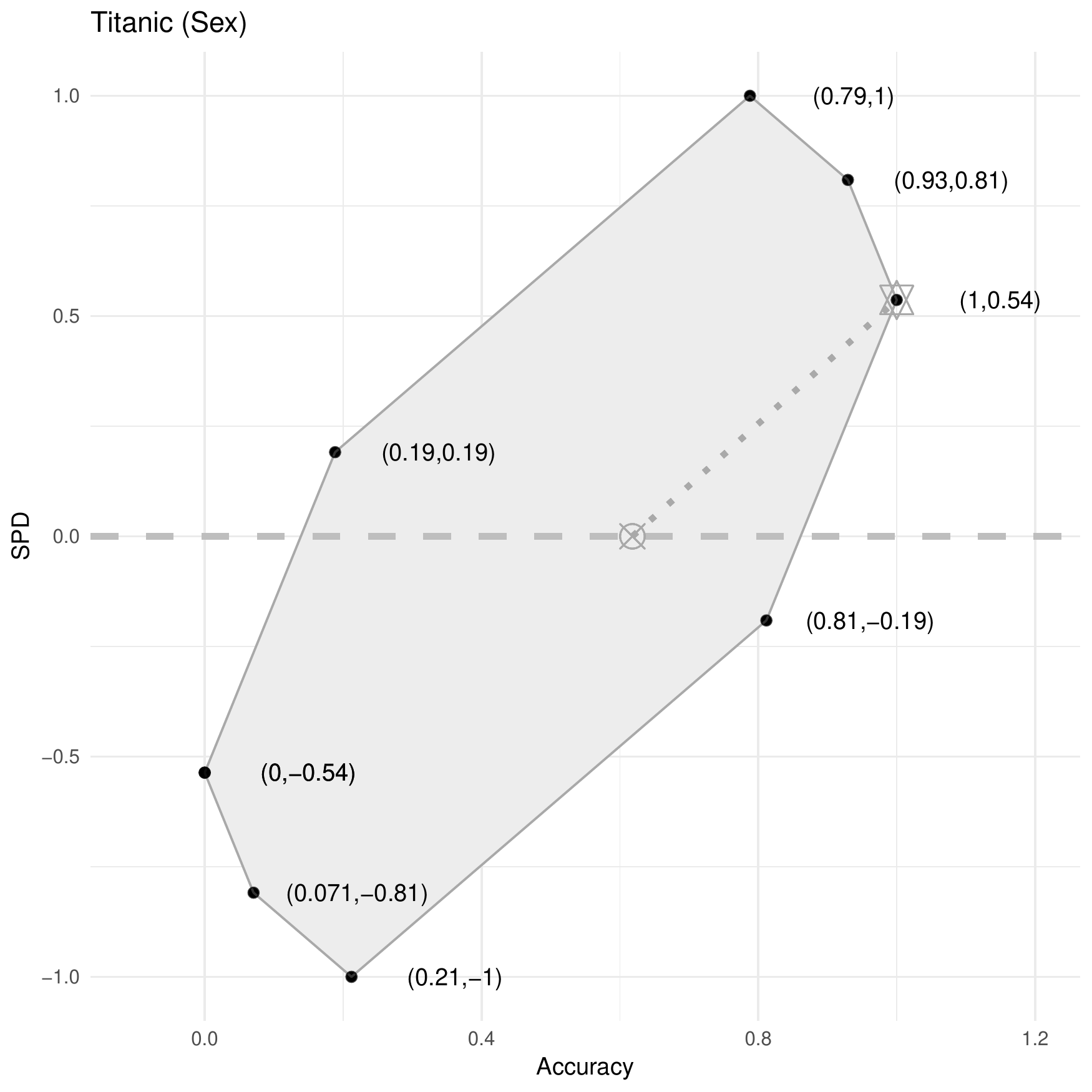}
 \caption{SPD-accuracy spaces for the datasets in Table \ref{tab:PRE}. We also show the majority class classifier with a cross, and connect it to the perfect classifier (represented by a star).
 }%
    \label{fig:octa_datasets}%
\end{figure}

\comment{

\section{Fairness metrics and why we chose one?}\sidenoteJose{No creo que esto sea necesario, a no ser que tengamos los resultados para las otras métricas y lo pongamos como supplementary material.}

Possible metrics:

\begin{itemize}
    \item \textbf{Statistical Parity Difference (SPD)}. This is the difference in the probability of favorable outcomes between the unprivileged and privileged groups\sidenoteJose{Incorreccto}. This can be computed both from the input dataset as well as from the dataset output from a classifier (predicted dataset). A value of 0 implies both groups have equal benefit, a value less than 0 implies higher benefit for the privileged group, and a value greater than 0 implies higher benefit for the unprivileged group.\sidenoteJose{Pero esta explicación, como lo dice al revés, es correcta.}
    
     \item \textbf{Disparate Impact (DI)}. This is the ratio in the probability of favorable outcomes between the unprivileged and privileged groups. This can be computed both from the input dataset as well as from the dataset output from a classifier (predicted dataset). A value of 1 implies both groups have equal benefit, a value less than 1 implies higher benefit for the privileged group, and a value greater than 1 implies higher benefit for the unprivileged group.

    \item \textbf{Average odds difference (OddsDif)}. This is the average of difference in false positive rates and true positive rates between unprivileged and privileged groups\sidenoteJose{Esto no está bien definido ni de palabra :-) Una diferencia de diferencias?}. This is to be computed from the dataset output from a classifier and hence needs to be computed using the input and output datasets to a classifier. A value of 0 implies both groups have equal benefit, a value less than 0 implies higher benefit for the privileged group and a value greater than 0 implies higher benefit for the unprivileged group.

     \item \textbf{Equal opportunity difference (EOD)}. This is the difference in true positive rates between unprivileged and privileged groups. This is to be computed from the dataset output from a classifier and hence needs to be computed using the input and output datasets to a classifier. A value of 0 implies both groups have equal benefit, a value less than 0 implies higher benefit for the privileged group and a value greater than 0 implies higher benefit for the unprivileged group.

\end{itemize}

}

\end{document}